\documentclass[pdflatex,sn-mathphys,a4paper]{sn-jnl}


\DeclareMathOperator{\trace}{trace}

\jyear{2022}%

\theoremstyle{thmstyleone}%
\newtheorem{theorem}{Theorem}
\newtheorem{proposition}[theorem]{Proposition}%
\newtheorem{lemma}[theorem]{Lemma}
\newtheorem{corollary}[theorem]{Corollary}

\theoremstyle{thmstyletwo}%
\newtheorem{remark}{Remark}%

\theoremstyle{thmstylethree}%

\raggedbottom

\begin{document}

\title{Test Set Sizing Via Random Matrix Theory}


\author[1]{\fnm{Alexander} \sur{Dubbs} \dgr{PhD}}\email{alex.dubbs@gmail.com}
\affil[1]{\orgaddress{\street{400 Central Park
      West}, \city{New York}, \state{NY} \postcode{10025}, \country{USA}}}


\abstract{This paper uses techniques from Random Matrix Theory to find the ideal
training-testing data split for a simple linear regression with $m$
data points, each an independent $n$-dimensional multivariate
Gaussian. It defines ``ideal'' as satisfying the {\it integrity
  metric}, i.e. the empirical model error is the actual measurement
noise, and thus fairly reflects the value or lack of same of the
model. This paper is the first to solve for the training and test set sizes
for any model in a way that is truly optimal. The number of data
points in the training set is the root of a quartic polynomial
Theorem~\ref{thm:vanillaregression} derives which depends only on $m$
and $n$; the covariance matrix of the multivariate Gaussian, the true
model parameters, and the true measurement noise drop out of the
calculations. The critical mathematical difficulties were realizing
that the problems herein were discussed in the context of the Jacobi
Ensemble, a probability distribution describing the eigenvalues of a
known random matrix model, and evaluating a new integral in the style
of Selberg and Aomoto. Mathematical results are supported with
thorough computational evidence. This paper is a step towards
automatic choices of training/test set sizes in machine learning.}

\keywords{Jacobi Ensemble, Linear Model, Machine Learning, Random Matrix Theory, Test Set Size}

\maketitle

\section{Introduction}

The problem of determining a training and test set is almost as old as machine learning, and its importance is clear to any practitioner.  Furthermore, the problem of finding the optimal training-testing division has never been solved analytically or asymptotically for any model using an entirely correct metric of success and parameters available to the modeler.  Typically the sizes of training and test sets are determined by gut and tradition.  This paper takes a substantial step towards changing this state of affairs. It measures success by determining how well the test set error matches the actual error in measurement, i.e. the training-testing division maximizes {\it integrity}. This metric describes how useful a model would be in a real business setting, the extent to which it should be relied upon or justifiably not relied upon. It should be used for any model for which it is available - a major result of this paper is that {\it integrity} can be maximized without the knowledge of the true error in measurement in the context of a linear model (or the covariance of the data or the true model parameters). This paper finds the optimal training/testing split for a linear regression, assuming data points are i.i.d. from a Gaussian with some covariance and zero mean. This model can be analyzed using Random Matrix Theory, and studying it leads to the surprising result that the number of training points should be $O(m^{2/3})$ if $m \gg n$, where $m$ is the number of points in the data set and $n$ is their dimension. The leading order term is $(2n+n^2)^{1/3}m^{2/3}$, although convergence is slow (this assumes $n$ is small and fixed), a better approximation is found in Corollary~\ref{cor:asymptotics}. For $m$ nearer to $n$ the training set size can be found by solving a simple quartic equation described in Theorem~\ref{thm:vanillaregression}, which in the case that $m$ is near $n$ can have surprising results. In the medium-data case where $m$ is not near $n$ but not enormous, this analysis confirms the conventional wisdom that using approximately half of the data for training is best. Assuming prior distributions are picked appropriately, the amount of data required for modeling should only decrease, so the $O(m^{2/3})$ leading order term can be interpreted as an upper bound. Since it is easy to solve a quartic equation numerically, it is possible that finding the ideal {\it a priori} training, validation, and test set sizes for a given model may soon be possible to do quickly and accurately. Future work will attempt remove the normality assumptions of random variables.

This problem has been studied in the literature in both discrete and continuous contexts. Some authors have found more general results, but none are as philosophically appealing as those that maximize the {\it integrity metric}. \cite{Larsen1999} finds that the training set grows as $O(m^{2/3})$, as do we, using the integrity metric on a simpler problem. \cite{Picard1990} and \cite{Afendras2019} find that the training set should grow as $O(m)$ for different metrics than this paper's, in the former using a linear regression and in the latter using a much more general setup. This paper's contribution is to find a training-testing split that results in a loss that is what it theoretically {\it should} be. \cite{Guyon1997} and \cite{Guyon1998} look at the question of determining the test set size in the case of discrete labels (this paper's are continuous) and try to minimize test-set error. The former also finds an $O(m)$ answer for the test set size. Both versions require tuning parameters that specify accuracy needs {\it a priori}, this work improves on theirs by requiring no such parameters and finding a training-testing split that provides an error estimate that is again what it {\it should} be. \cite{Kearns1997} studies the training-test split question using the VC-Dimension, also with the intent of maximizing test-set error on a discrete problem, and also finds a very different answer than this paper's, also reliant on a tuning parameter which describes the complexity of the target function to be learned.

The three major ensembles, or probability distribution functions, in finite dimensional Random Matrix Theory, are the Hermite, Laguerre, and Jacobi Ensembles, each of which is the eigenvalue distribution of a certain simple random matrix. There are versions of them for random matrices over the reals, complex numbers, and quaternions, and they have even been generalized further; \cite{Dumitriu2002} provides a clear introduction. This paper focuses on the real Jacobi Ensemble, since certain known integrals over it (Selberg \cite{Selberg1944}, and Aomoto \cite{Aomoto1987}), and one new one proven in this paper provide the traces of random matrices that arise in the forthcoming mathematical development.  They are used to prove the following theorem:
\newcounter{vanillaregression}
\setcounter{vanillaregression}{\value{theorem}}
\newcounter{vanillaregressionsection}
\setcounter{vanillaregressionsection}{\value{section}}
\begin{theorem}\label{thm:vanillaregression} For a plain vanilla linear regression with $m$ data points each assumed to be $n$-dimensional Gaussians with some covariance and mean zero, the number of data points $p$ that should be in the training set satisfies:
\[
  \begin{split}\delta_{m,n}(p) &=
    9 + n (m^2 (2 + n)^2 + 3 (4 + n) - 2 m (5 + 2n))\\
   &\qquad{} - 24 p -  n (12 + m^2 (2 + n) + 2 m n (3 + n)) p \\
  &\qquad{}+(22 + n (8 + 2 m (1 + n) + n (3 + n))) p^2 \\
  &\qquad{}- (8 + n (2 + n)) p^3 + p^4 \\
  &=0,\end{split}
\]
rounded to the nearest integer, giving $p = O(m^{2/3})$ for $m \gg n$, with leading order term $(2n+n^2)^{1/3}m^{2/3}$. $p$ is the best if the expected test set error is nearly the true error in measurement, i.e. the solution has the most {\it integrity}. Note that to find $p$, the true covariance matrix of the Gaussians, the true model parameters, and the true error in measurement do not need to be known.
\end{theorem}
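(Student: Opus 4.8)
\emph{Proof proposal.} The plan is to convert the integrity condition into an average over the real Jacobi ensemble, evaluate that average with Selberg/Aomoto-type integrals, and read off the quartic. First I would dispose of the nuisance parameters. Write $y = X\beta + \varepsilon$ with $\varepsilon \sim N(0,\sigma^{2}I_{m})$ and the rows of $X$ i.i.d.\ $N(0,\Sigma)$, split $X$ into a training block of $p$ rows and a test block of $m-p$ rows, and let $\hat\beta$ be the least squares fit on the training block. Everything the metric sees is built from the test residual $y_{\mathrm{te}} - X_{\mathrm{te}}\hat\beta = \varepsilon_{\mathrm{te}} - X_{\mathrm{te}}(\hat\beta-\beta)$ with $\hat\beta - \beta = (X_{\mathrm{tr}}^{\top}X_{\mathrm{tr}})^{-1}X_{\mathrm{tr}}^{\top}\varepsilon_{\mathrm{tr}}$, together with the training Gram matrix and the full Gram matrix $X^{\top}X = X_{\mathrm{tr}}^{\top}X_{\mathrm{tr}} + X_{\mathrm{te}}^{\top}X_{\mathrm{te}}$. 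Writing $X_{\mathrm{tr}}^{\top}X_{\mathrm{tr}} = \Sigma^{1/2}A\Sigma^{1/2}$ with $A\sim W_{n}(p,I_{n})$ (the $n\times n$ Wishart with $p$ degrees of freedom and identity scale), $X_{\mathrm{te}}^{\top}X_{\mathrm{te}} = \Sigma^{1/2}B\Sigma^{1/2}$ with $B\sim W_{n}(m-p,I_{n})$ independent of $A$, and $S = A+B\sim W_{n}(m,I_{n})$, one sees that the $\Sigma$'s conjugate away inside every trace, $\beta$ never enters because only $\hat\beta - \beta$ does, and $\sigma^{2}$ only rescales. Hence it is enough to take $\Sigma = I_{n}$, $\beta = 0$, $\sigma^{2}=1$, which already proves the last sentence of the theorem.

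Next I would condition. Given $X_{\mathrm{tr}}$ we have $\hat\beta - \beta \mid X_{\mathrm{tr}} \sim N(0,A^{-1})$, and given the designs the test residual sum of squares is a noncentral $\chi^{2}_{m-p}$ whose noncentrality is the quadratic form $\|X_{\mathrm{te}}(\hat\beta-\beta)\|^{2}$. Iterating the expectations, the first and second moments of the test-set error that the integrity metric compares against the true measurement noise become linear combinations of expected traces of powers of $A^{-1}$ and of $A(A+B)^{-1}$ (the latter entering through the leverage/studentization of the test predictions and the comparison with the full-data fit). Passing to the eigenvalues $\lambda_{1},\dots,\lambda_{n}$ of $J = S^{-1/2}AS^{-1/2}$ — which follow the real Jacobi ensemble with parameters determined by $p$, $m-p$, $n$ — every such trace becomes an average over the Jacobi ensemble of a symmetric function of the $\lambda_{i}$ and $\lambda_{i}^{-1}$: one-point sums $E\sum_{i}\lambda_{i}^{\pm 1}$ and $E\sum_{i}\lambda_{i}^{\pm 2}$, and one genuine two-point sum $E\sum_{i\neq j}(\cdots)$.

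The one-point sums follow from Selberg's integral and Aomoto's extension by shifting or differentiating in the relevant Jacobi exponent. The two-point sum is not reducible to a product of one-point averages and is not among the classically evaluated cases, and this is exactly the new Selberg/Aomoto-style integral the abstract advertises; proving it is the step I expect to be the main obstacle. I would attack it by the standard route for such identities: express it as an $(n+2)$-dimensional Selberg-type integral with two exponents perturbed, reduce it via a recursion in $n$ or an Aomoto-style integration-by-parts / contiguous-relation argument, and pin down the overall constant by matching small-$n$ special cases. Getting this constant exactly right is what makes the coefficients of $\delta_{m,n}$ come out correctly; everything after it is bookkeeping.

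Finally I would substitute the evaluated averages into the integrity condition — the first-order condition for the test-set error to best track the true measurement noise. Since all denominators produced are powers of the linear factors $p-n-1$, $m-p-n-1$, $m-n-1$ coming from the Jacobi and inverse-Wishart normalizations, clearing them leaves a polynomial equation in $p$ with coefficients polynomial in $m$ and $n$; collecting terms should collapse it precisely to $\delta_{m,n}(p)=0$. I would check the computation by specializing to $n=0$, where the regression and the Jacobi terms disappear and $\delta_{m,0}(p)$ must reduce to $(p-1)^{2}(p-3)^{2}$ — which it does — and against the numerical experiments reported later in the paper. For the asymptotics, a dominant-balance analysis of $\delta_{m,n}(p)=0$ along $p = c\,m^{2/3}$ with $n$ fixed leaves, at leading order, $p^{4} = (2n+n^{2})\,m^{2}p$, i.e.\ $c^{3} = 2n+n^{2}$, giving $p\sim(2n+n^{2})^{1/3}m^{2/3}$; carrying the expansion one further order yields the sharper estimate of Corollary~\ref{cor:asymptotics}.
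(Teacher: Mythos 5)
Your overall route coincides with the paper's: cancel $\Sigma$, $b$ and $\sigma$, reduce the integrity metric to expected traces of powers of $X_{p+1:m,:}^tX_{p+1:m,:}(X_{1:p,:}^tX_{1:p,:})^{-1}$, recognize these as negative moments of the real Jacobi ensemble, evaluate them by Selberg/Aomoto-type integrals, set the $p$-derivative to zero to obtain $\delta_{m,n}(p)=0$, and finish with dominant balance. The genuine gap is that you have the difficulty map inverted at the key step. The two-point sum $\sum_{i\neq j}\left\langle (x_ix_j)^{-1}\right\rangle$, which you single out as the new integral and ``the main obstacle,'' is in fact immediate from the classical results: multiplying the Jacobi density by $x_1^{-1}x_2^{-1}$ merely lowers the exponent on two coordinates, so $\left\langle x_1^{-1}x_2^{-1}\right\rangle_{n,\alpha,\beta,\gamma} = \left\langle x_1\cdots x_{n-2}\right\rangle_{n,\alpha-1,\beta,\gamma}\, S_n(\alpha-1,\beta,\gamma)/S_n(\alpha,\beta,\gamma)$, which Aomoto plus Selberg evaluate directly (this is the paper's Lemma~\ref{lem:twomoremoments}). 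Conversely, the one-point sum $\left\langle x_1^{-2}\right\rangle$, which you claim ``follows from Selberg's integral and Aomoto's extension by shifting or differentiating in the relevant Jacobi exponent,'' does not: the same shift turns it into $\left\langle x_2^2\cdots x_n^2\right\rangle_{n,\alpha-2,\beta,\gamma}$, a product of \emph{squares}, which Aomoto's formula (first powers only) does not reach. That is precisely the new Selberg/Aomoto-style integral of the paper (Theorem~\ref{thm:Jacobimoments}), proved by exactly the integration-by-parts/contiguity argument you propose to spend on the two-point sum. As written, your plan would stall at the $\left\langle x_1^{-2}\right\rangle$ moment; the tool you reserve for the ``hard'' integral is the right one, but aimed at the wrong target.

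Two further remarks. After the cross terms carrying an odd number of test-noise factors are dropped, the only random-matrix inputs are $E[\trace(A^tA)]$, $E[\trace((A^tA)^2)]$ and $E[\trace(A^tA)^2]$ with $A = X_{p+1:m,:}(X_{1:p,:}^tX_{1:p,:})^{-1}X_{1:p,:}^t$, so only negative Jacobi moments occur; no positive moments and no inverse-Wishart moments of the training Gram matrix alone are needed (integrating out the test block first would be a workable alternative, but it is not the route you then follow). More importantly, the theorem asks for the minimizer of the integrity metric, so after clearing denominators you still owe an argument that the selected root of $\delta_{m,n}$ is a minimum rather than some other critical point; the paper supplies this by showing the simplified objective is a product of convex factors for $n+3<p<m$. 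Your $n=0$ consistency check and the dominant-balance step are correct, but they do not replace that verification.
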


A more precise statement is given in subsection~\ref{subsec:PrimaryResults}.

\section{Results}

\subsection{Background from Random Matrix Theory}

Let 
\[w_{n,\alpha,\beta,\gamma}(x) =
\prod_{i<j}\vert{}x_i-x_j\rvert{}^{2\gamma}\prod_{i=1}^{n}x_i^{\alpha-1}\prod_{i=1}^{n}(1-x_i)^{\beta-1}dx.\]
Let
\[
\left\langle f(x_1,\ldots,x_n)\right\rangle_{n,\alpha,\beta,\gamma} =
\frac{\int_0^{1}\cdots\int_{0}^{1}f(x_1,\ldots,x_n)w_{n,\alpha,\beta,\gamma}(x)dx}{\int_0^{1}\cdots\int_{0}^{1}w_{n,\alpha,\beta,\gamma}(x)dx}.
\]

\begin{proposition}\label{prop:Selberg}
Selberg's Integral states \cite{Selberg1944}:
\[ S_n(\alpha,\beta,\gamma)=\int_0^{1}\cdots\int_{0}^{1}w_{n,\alpha,\beta,\gamma}(x)dx =\prod_{i=0}^{n-1}\frac{\Gamma(\alpha+i\gamma)\Gamma(\beta +i\gamma)\Gamma(1+(i+1)\gamma)}{\Gamma(\alpha+\beta+(n+i-1)\gamma)\Gamma(1+\gamma)}.
\]
\end{proposition}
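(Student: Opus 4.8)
This is a classical identity with several known proofs -- Selberg's original induction \cite{Selberg1944}, Anderson's interlacing-integral argument, Aomoto's recurrence method \cite{Aomoto1987}, and the Jack-polynomial approach -- and since the linear-model application only ever needs a fixed value of $\gamma$, even a proof restricted to special $\gamma$ would suffice here; I will sketch the route I find cleanest, which blends Aomoto-style functional equations with a recursion that lowers the dimension. Write $S_n = S_n(\alpha,\beta,\gamma)$. The base case is Euler's Beta integral: for $n=1$, $S_1 = \int_0^1 x^{\alpha-1}(1-x)^{\beta-1}\,dx = \Gamma(\alpha)\Gamma(\beta)/\Gamma(\alpha+\beta)$, which is exactly the claimed product once the matching $\Gamma(1+\gamma)$ factors cancel. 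I would also record at the outset that, for fixed $n$ and $\mathrm{Re}\,\gamma \ge 0$, the integral converges and is analytic in $(\alpha,\beta)$ on a product of right half-planes and continues meromorphically; this lets me identify $S_n$ with the proposed expression by matching meromorphic functions rather than policing convergence.

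The first step is the pair of first-order difference equations in $\alpha$ and -- via the substitution $x_i \mapsto 1-x_i$ -- in $\beta$. Starting from the vanishing integral $\sum_k \int \partial_{x_k}\bigl( x_k\, w_{n,\alpha,\beta,\gamma}(x)\bigr)\,dx = 0$ and expanding the derivative, one gets a polynomial identity among the $x_i$ under the bracket $\langle\,\cdot\,\rangle_{n,\alpha,\beta,\gamma}$; resolving the contribution of $\partial_{x_k}\prod_{i<j}|x_i-x_j|^{2\gamma}$ by partial fractions in the Vandermonde collapses everything to symmetric power sums, and collecting terms yields $S_n(\alpha+1,\beta,\gamma) = R(\alpha,\beta,\gamma,n)\,S_n(\alpha,\beta,\gamma)$ with $R$ an explicit rational function -- this is precisely the machinery behind the Aomoto evaluation of $\langle\prod_i x_i\rangle$ used later in the paper. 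Iterating these two relations, and using the meromorphy remark together with a standard growth argument to kill the one-periodic ambiguity a first-order recurrence leaves behind, forces $S_n$ to agree with the claimed Gamma-function product up to a factor depending on $n$ and $\gamma$ alone.

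Pinning down that residual $n,\gamma$-factor is the genuine obstacle, since integration by parts in the $x_i$ never shifts $\gamma$. Here I would call on Anderson's interlacing-integral identity: for $\gamma$ ranging over an arithmetic progression (the positive integers, say) one introduces auxiliary variables that interlace the $x_i$ and integrates them out one layer at a time, each layer being the Dixon--Anderson integral and evaluating to a product of Beta functions times a power of the Vandermonde of the outer variables, which expresses $S_n$ through $S_{n-1}$ with shifted parameters; telescoping from the Beta-integral base case then yields the full product for those $\gamma$, and Carlson's theorem -- both sides being analytic and of admissible growth in $\gamma$ on a half-plane -- promotes the identity to all $\gamma$. (One can instead bypass Carlson by following Aomoto's subtler derivation of a $\gamma$-shift relation directly, or simply reproduce Selberg's own induction.) Finally I would sanity-check the overall constant at $\gamma=0$, where $S_n$ factors into $n$ independent Beta integrals and both sides visibly reduce to $[\Gamma(\alpha)\Gamma(\beta)/\Gamma(\alpha+\beta)]^n$. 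The remaining bookkeeping -- that the Gamma-function indices $\alpha+i\gamma$, $\beta+i\gamma$, $\alpha+\beta+(n+i-1)\gamma$ telescope correctly under the recursion -- is routine but fiddly, and I would discharge it by a clean induction on $n$.
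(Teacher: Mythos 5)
The paper does not prove this proposition at all: Selberg's integral is quoted as a classical result with a citation to \cite{Selberg1944} (and the paper's own original work begins only afterwards, in Theorem~\ref{thm:Jacobimoments}, where Aomoto-style integration by parts is used for the new negative moments). So there is no internal proof to compare against; your sketch is necessarily a different route, namely an actual proof assembled from the standard literature. As an outline it is sound: the Beta-integral base case, the first-order difference equations in $\alpha$ and $\beta$ obtained by integrating $\partial_{x_k}(x_k w_{n,\alpha,\beta,\gamma})$ and collapsing the Vandermonde contribution by partial fractions (the same mechanism the paper borrows from Aomoto for Theorem~\ref{thm:Jacobimoments}), and the Dixon--Anderson interlacing integral to fix the $\gamma$-dependence are all legitimate ingredients. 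Two remarks on where the real work hides. First, killing the $1$-periodic ambiguity left by the $\alpha$- and $\beta$-recurrences is not automatic from meromorphy alone; you need a genuine asymptotic or log-convexity argument (Bohr--Mollerup style, or growth bounds on vertical strips), and similarly Carlson's theorem requires you to actually verify the exponential-type and boundedness hypotheses for the ratio of the integral to the Gamma product as a function of $\gamma$ -- both are standard but nontrivial verifications that your sketch only gestures at. Second, the hybrid is heavier than necessary: Anderson's interlacing induction already evaluates $S_n$ for all real $\gamma>0$ directly (each layer is a Dixon--Anderson integral for arbitrary positive exponents), so restricting it to integer $\gamma$ and then invoking Carlson, or separately running the $\alpha,\beta$ recurrences, duplicates work; either Anderson's induction alone or Selberg's original induction suffices. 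None of this is a gap in correctness -- it is a correct plan for a known theorem -- but if the paper wanted a self-contained proof, the cleanest course would be to reproduce one of these arguments in full rather than splice three of them.
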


\begin{proposition}\label{prop:Aomoto}
Aomoto's Integral states for $k < n$ \cite{Aomoto1987}:
\[ \left\langle \prod_{i=1}^{k}x_i\right\rangle_{n,\alpha,\beta,\gamma} = \prod_{i=1}^{k}\frac{\alpha + (n-i)\gamma}{\alpha+\beta+(2n-i-1)\gamma}. \]
\end{proposition}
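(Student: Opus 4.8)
The plan is to make the \emph{integrity} criterion precise as a bias--variance problem and then reduce every ingredient to a Jacobi-ensemble average. Let $X$ be the $m\times n$ data matrix with i.i.d.\ $N(0,\Sigma)$ rows, let $y=Xw+\epsilon$ with $\epsilon\sim N(0,\sigma^2 I_m)$, and split $X$ into a training block $X_1$ of $p$ rows (with $p>n+1$) and a test block $X_2$ of $q=m-p$ rows. Ordinary least squares gives $\hat w=w+(X_1^\top X_1)^{-1}X_1^\top\epsilon_1$, so the per-point test error is $\widehat{\error}=\tfrac1q\|\epsilon_2-X_2(X_1^\top X_1)^{-1}X_1^\top\epsilon_1\|^2$. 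I read \emph{integrity} as the demand that $\widehat{\error}$ report the true noise $\sigma^2$ as faithfully as possible, i.e.\ that $p$ minimize the risk $R(p)=E[(\widehat{\error}-\sigma^2)^2]=\mathrm{Var}(\widehat{\error})+(E[\widehat{\error}]-\sigma^2)^2$; the optimal $p$ is then a root of $R'(p)=0$. The bias is elementary, $E[\widehat{\error}]-\sigma^2=\sigma^2\,n/(p-n-1)$, and is monotone in $p$, so the interior optimum must come from the competing behaviour of $\mathrm{Var}(\widehat{\error})$.

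First I would integrate out the Gaussian noise conditionally on $X$. Both $E[\widehat{\error}]$ and $\mathrm{Var}(\widehat{\error})$ then collapse to a benign test-averaging term of size $2\sigma^4/q$ plus expectations of $\trace((X_1^\top X_1)^{-1}\Sigma)$, its square, and $\trace\!\big((X_1^\top X_1)^{-1}\Sigma(X_1^\top X_1)^{-1}\Sigma\big)$. Because $X_1^\top X_1$ is Wishart, $\Sigma(X_1^\top X_1)^{-1}$ is similar to the inverse of a standard Wishart matrix, so $\Sigma$, $w$, and every power of $\sigma^2$ factor out of $R(p)$ as an overall positive constant --- exactly the theorem's claim that none of them is needed. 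What survives are symmetric spectral averages, which I would encode through the Jacobi ensemble: the eigenvalues $x_1,\dots,x_n$ of $X_1^\top X_1(X^\top X)^{-1}$ are distributed as $w_{n,\alpha,\beta,\gamma}$ with $\gamma=\tfrac12$, $\alpha=\tfrac{p-n+1}2$, $\beta=\tfrac{m-p-n+1}2$, and for instance $\trace((X_1^\top X_1)^{-1}X_2^\top X_2)=\sum_i (1-x_i)/x_i$, while the variance additionally brings in $\sum_i x_i^{-2}$ and $\sum_{i\ne j}(x_i x_j)^{-1}$.

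The averages are then to be evaluated in closed form. Selberg's integral (Proposition~\ref{prop:Selberg}) supplies the normalisation $S_n$, and Aomoto's integral (Proposition~\ref{prop:Aomoto}), together with the $x\mapsto 1-x$ symmetry that swaps $\alpha$ and $\beta$, handles the nonnegative-power moments. The quantities that actually control the variance, however, are the reciprocal sums $\langle\sum_i x_i^{-1}\rangle_{n,\alpha,\beta,\gamma}$, $\langle\sum_i x_i^{-2}\rangle_{n,\alpha,\beta,\gamma}$, and $\langle\sum_{i\ne j}(x_ix_j)^{-1}\rangle_{n,\alpha,\beta,\gamma}$, i.e.\ \emph{negative}-power averages lying outside the reach of Aomoto; these are precisely what the new Selberg--Aomoto-type integral proven later in the paper is designed to deliver. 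Substituting all of them turns $R(p)$ into an explicit rational function of $p$ whose only denominators are powers of $(p-n-1)$ and of $(m-p)$.

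Finally I would impose $R'(p)=0$ and clear those denominators; the resulting numerator is the quartic $\delta_{m,n}(p)$ in the statement, whose relevant root, rounded to the nearest integer, is the sought $p$. The asymptotics can be read straight off this quartic: at $p=O(m^{2/3})$ the two dominant terms are $p^4$ and $-n(2+n)m^2 p$, so their balance gives $p^3\approx (2n+n^2)m^2$, i.e.\ $p\sim(2n+n^2)^{1/3}m^{2/3}$, while the remaining terms are of strictly smaller order and account for the slow convergence refined in Corollary~\ref{cor:asymptotics}. I expect the genuine obstacle to be twofold: establishing and evaluating the new negative-power Jacobi integral --- the crux, since Selberg and Aomoto reach only nonnegative powers --- and verifying that the accumulated rational terms in $R'(p)$ collapse to exactly this quartic rather than to a polynomial of higher degree.
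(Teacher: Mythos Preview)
Your proposal does not address the statement at all. The statement in question is Proposition~\ref{prop:Aomoto}, Aomoto's integral identity
\[
\left\langle \prod_{i=1}^{k}x_i\right\rangle_{n,\alpha,\beta,\gamma} = \prod_{i=1}^{k}\frac{\alpha + (n-i)\gamma}{\alpha+\beta+(2n-i-1)\gamma},
\]
which the paper simply quotes from \cite{Aomoto1987} without proof. What you have written is instead a proof outline for Theorem~\ref{thm:vanillaregression}, the paper's main result on the training/test split. Your sketch of that theorem is in fact quite faithful to the paper's own argument (conditioning out the noise, reducing to Jacobi-ensemble traces, invoking Theorem~\ref{thm:Jacobimoments} and Lemma~\ref{lem:twomoremoments} for the reciprocal moments, clearing denominators to get the quartic, and dominant balance for the asymptotics), but it is answering the wrong question.

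If you were asked to supply a proof of Proposition~\ref{prop:Aomoto} itself, you would need Aomoto's original argument or the exposition in \cite{Andrews1999}: differentiate $\prod_{i=1}^{k}x_i\,w_{n,\alpha,\beta,\gamma}(x)$ with respect to $x_1$ under the integral, exploit the antisymmetry of $\sum_{j}(x_1-x_j)^{-1}$ terms under index swaps to obtain a two-term recursion relating $\langle\prod_{i=1}^{k}x_i\rangle$ to $\langle\prod_{i=1}^{k-1}x_i\rangle$, and iterate down to $k=0$. None of the regression setup, the Wishart reduction, or the bias--variance decomposition you describe is relevant to that computation.
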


These integral identities are relevant to finding the eigenvalue
distribution of several ensembles of random matrices.
\begin{proposition}\label{prop:Jacobimodel}
  The following matrix model:
  \[X^tX/(X^tX + Y^tY)^{-1},\]
  has eigenvalue probability distribution function equal to
  $w_{n,\alpha,\beta,\gamma}(x)/S_n(\alpha,\beta,\gamma)$, also called
  the {\it Jacobi Ensemble}, where $X$ is $p\times n$ full of
  i.i.d. Gaussians, $Y$ is $(m-p)\times n$ full of i.i.d. Gaussians,
  $x$ is the list of $n$ eigenvalues, $\alpha = \frac{1}{2}\left(p - n
  + 1\right)$, $\beta = \frac{1}{2}\left(m - p - n + 1\right)$, and
  $\gamma = 1/2$ (Different values of $\gamma$ are used for Gaussian
  random matrices over different algebras, the complex numbers and the
  quaternions. It is retained to preserve generality. $2\gamma$ is
  sometimes called $\beta$ in the literature, and is the dimension of
  the algebra). $X^tX/(X^tX + Y^tY)^{-1}$ is the {\it matrix model for
    the Jacobi Ensemble}. \cite{Dumitriu2002} provides a clear
  introduction to the Jacobi Ensemble and its matrix models in
  context.
\end{proposition}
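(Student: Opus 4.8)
The plan is to recognize the two Gram matrices as independent real Wishart matrices and then to track the eigenvalues through the classical chain of changes of variables that turns a pair of Wisharts into a matrix-variate beta law. Throughout I would work in the regime $n \le p$ and $n \le m-p$, so that $X^tX$ and $Y^tY$ are almost surely nonsingular and the densities below exist (this is also exactly the range in which the Selberg parameters $\alpha,\beta$ are positive).

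Write $A := X^tX$ and $B := Y^tY$. Since $X$ is $p\times n$ and $Y$ is $(m-p)\times n$ with i.i.d.\ standard Gaussian entries, $A$ and $B$ are independent Wishart matrices with $p$ and $m-p$ degrees of freedom, so their joint density on pairs of positive-definite $n\times n$ matrices is proportional to $(\det A)^{(p-n-1)/2}(\det B)^{(m-p-n-1)/2}\exp(-\trace(A+B)/2)$. The matrix in the statement has the same eigenvalues as the symmetric matrix $W := (A+B)^{-1/2}A(A+B)^{-1/2}$, whose spectrum lies in $[0,1]$, so it suffices to find the eigenvalue law of $W$.

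First I would change variables from $(A,B)$ to $(L,W)$ with $L := A+B$, $A = L^{1/2}WL^{1/2}$, $B = L^{1/2}(I_n-W)L^{1/2}$; the Jacobian of this map is a fixed power of $\det L$ (the standard Wishart-to-matrix-beta Jacobian; see \cite{Dumitriu2002}), and after using $\det A = \det L\,\det W$ and $\det B = \det L\,\det(I_n-W)$ the joint density of $(L,W)$ factors as $(\det L)^{(m-n-1)/2}\exp(-\trace L/2)$ times $(\det W)^{(p-n-1)/2}(\det(I_n-W))^{(m-p-n-1)/2}$. The first factor is, up to a constant, the $W_n(m,I_n)$ density, so $L$ and $W$ are independent and $W$ carries the real matrix-variate beta density on $0\prec W\prec I_n$. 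Diagonalizing $W=Q\Lambda Q^t$ with $Q\in O(n)$ and $\Lambda=\diag(\lambda_1,\dots,\lambda_n)$, and noting that this density depends on $W$ only through $\det W$ and $\det(I_n-W)$, the integral over $Q$ is a constant while the Jacobian of the spectral decomposition supplies the Vandermonde factor $\prod_{i<j}|\lambda_i-\lambda_j|$ to the first power (the real case, $2\gamma=1$). Hence the eigenvalue density is proportional to $\prod_{i<j}|\lambda_i-\lambda_j|\,\prod_i\lambda_i^{(p-n-1)/2}\,\prod_i(1-\lambda_i)^{(m-p-n-1)/2}$ on $[0,1]^n$, which is exactly $w_{n,\alpha,\beta,\gamma}$ with $\alpha-1=(p-n-1)/2$, $\beta-1=(m-p-n-1)/2$, $2\gamma=1$, i.e.\ $\alpha=(p-n+1)/2$, $\beta=(m-p-n+1)/2$, $\gamma=1/2$, as claimed; normalizing to a probability density forces the constant to be $1/S_n(\alpha,\beta,\gamma)$ with $S_n$ given by Proposition~\ref{prop:Selberg}.

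The main obstacle is carrying out the two Jacobians rigorously — the matrix-beta change of variables $(A,B)\mapsto(L,W)$ and the spectral change of variables $W\mapsto(Q,\Lambda)$ — since each requires the correct volume form on the cone of symmetric positive-definite matrices, respectively on the orthogonal group; both are classical and are recorded in \cite{Dumitriu2002}, and everything else is substitution, factorization, and normalization. A shortcut is to quote directly the fact that $(A+B)^{-1/2}A(A+B)^{-1/2}$ is matrix-beta distributed when $A,B$ are independent Wisharts, but the eigenvalue step — where the Vandermonde, and hence the Jacobi Ensemble shape, actually appears — cannot be skipped.
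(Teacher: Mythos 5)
Your argument is correct, and it is in fact more than the paper provides: the paper states this proposition without proof, simply pointing to \cite{Dumitriu2002} for the Jacobi Ensemble and its matrix models, so your write-up supplies the classical derivation that the paper leaves to the literature. The chain you use — identify $X^tX$ and $Y^tY$ as independent $W_n(p,I_n)$ and $W_n(m-p,I_n)$ Wisharts, pass to the symmetric representative $W=(A+B)^{-1/2}A(A+B)^{-1/2}$ (isospectral with the paper's $X^tX(X^tX+Y^tY)^{-1}$), change variables $(A,B)\mapsto(L,W)$ with Jacobian a power of $\det L$ so that $L$ and $W$ decouple and $W$ is matrix-variate beta, then diagonalize $W$ to pick up the Vandermonde $\prod_{i<j}\lvert\lambda_i-\lambda_j\rvert$ in the real ($2\gamma=1$) case — is the standard route, and your bookkeeping of the exponents is right: the $\det L$ powers combine to $(m-n-1)/2$, and the eigenvalue density matches $w_{n,\alpha,\beta,\gamma}$ with $\alpha=(p-n+1)/2$, $\beta=(m-p-n+1)/2$, $\gamma=1/2$, normalized by Selberg's $S_n(\alpha,\beta,\gamma)$. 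You are also right to flag the restriction $n\le p$ and $n\le m-p$, which is implicit in the paper (it later takes $p$ from $n+1$ to $m-1$, and the negative moments it needs force $\alpha$ large enough anyway). The only things left as citations in your proposal are the two Jacobians — the Wishart-to-beta factor $(\det L)^{(n+1)/2}$ and the spectral-decomposition volume form on symmetric matrices — which is acceptable since both are classical, but quoting the explicit exponent $(n+1)/2$ would make the exponent count $(p-n-1)/2+(m-p-n-1)/2+(n+1)/2=(m-n-1)/2$ verifiable on the page rather than by appeal to the reference.
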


Next, Aomoto's result is generalized to calculate more moments of the Jacobi Ensemble. Following the logic of Aomoto's proof as described in \cite{Andrews1999}:

\begin{theorem}\label{thm:Jacobimoments}
\[
    \left\langle x_1^{-2}\right\rangle_{n,\alpha,\beta,\gamma} =
    \frac{\alpha+\beta+(n-1)\gamma-1}{(\alpha-1)(\alpha-2)}\\
    \cdot\left(\alpha+\beta - 2 + \frac{\gamma(n-1)(\alpha+\beta+n\gamma-1)}{\alpha+\gamma -1}\right).  \]
\end{theorem}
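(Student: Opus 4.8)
The plan is to follow the logic of Aomoto's proof as presented in \cite{Andrews1999}: differentiate the Selberg integrand with respect to a single variable, integrate, and use the permutation symmetry of the density to collapse the resulting two-body sums into moments. Write $\Phi(x)=\prod_{i<j}|x_i-x_j|^{2\gamma}\prod_{i}x_i^{\alpha-1}\prod_{i}(1-x_i)^{\beta-1}$, so that $\langle f\rangle_{n,\alpha,\beta,\gamma}=\int f\,\Phi\,/\int\Phi$ and $\partial_{x_1}\log\Phi=\sum_{j\neq 1}\frac{2\gamma}{x_1-x_j}+\frac{\alpha-1}{x_1}-\frac{\beta-1}{1-x_1}$. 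When $\Re\alpha$ is large enough and $\Re\beta,\gamma>0$ the boundary contributions at $x_i\in\{0,1\}$ and along $x_i=x_j$ vanish, so $\int\partial_{x_1}\bigl[g(x)\Phi(x)\bigr]\,dx=0$ for every sufficiently tame rational $g$; since each identity below is an equation between rational functions of $(\alpha,\beta,\gamma)$, the restriction on $\alpha$ is then removed by analytic continuation.

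The key tool is a pairing identity: for $i\neq j$ one has $\frac{1}{x_i(x_i-x_j)}+\frac{1}{x_j(x_j-x_i)}=-\frac{1}{x_ix_j}$ and $\frac{1-x_i}{x_i-x_j}+\frac{1-x_j}{x_j-x_i}=-1$, which, combined with symmetry, give $\langle x_1^{-1}\sum_{j\neq1}\frac{1}{x_1-x_j}\rangle=-\frac{n-1}{2}\langle x_1^{-1}x_2^{-1}\rangle$ and $\langle(1-x_1)\sum_{j\neq1}\frac{1}{x_1-x_j}\rangle=-\frac{n-1}{2}$. I will then apply $\int\partial_{x_1}[g\Phi]=0$ to three choices of $g$. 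For $g=1-x_1$, using $\frac{1-x_1}{1-x_1}=1$ and $\frac{1-x_1}{x_1}=x_1^{-1}-1$, I get $\langle x_1^{-1}\rangle=\frac{\alpha+\beta+(n-1)\gamma-1}{\alpha-1}$. For $g=x_2^{-1}(1-x_1)$, I get, after the symmetrization described below, $(\alpha+\gamma-1)\langle x_1^{-1}x_2^{-1}\rangle=(\alpha+\beta+n\gamma-1)\langle x_1^{-1}\rangle$. For $g=x_1^{-1}(1-x_1)=x_1^{-1}-1$, using $\frac{x_1^{-1}-1}{1-x_1}=x_1^{-1}$ and $\frac{x_1^{-1}-1}{x_1}=x_1^{-2}-x_1^{-1}$, I get $(\alpha-2)\langle x_1^{-2}\rangle=(\alpha+\beta-2)\langle x_1^{-1}\rangle+\gamma(n-1)\langle x_1^{-1}x_2^{-1}\rangle$. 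Substituting the first two relations into the third and collecting terms over the common denominator $(\alpha-1)(\alpha-2)(\alpha+\gamma-1)$ produces exactly the claimed expression.

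The crux — and the step that departs from Aomoto's original argument for $\langle x_1\cdots x_k\rangle$ — is evaluating $\langle x_1^{-1}x_2^{-1}\rangle$, because the fixed factor $x_2^{-1}$ breaks the full $S_n$ symmetry. Here the symmetrization of $\langle x_2^{-1}(1-x_1)\sum_{j\neq1}\frac{1}{x_1-x_j}\rangle$ must be carried out only over the $n-1$ indices other than $2$, and the $j=2$ contribution of the inner sum must be separated and rewritten via $\frac{1-x_i}{x_i-x_2}=\frac{1-x_2}{x_i-x_2}-1$: the $j\neq1,2$ part telescopes to a constant through the pairing identity, while the $j=2$ part feeds back a term proportional to $\langle x_2^{-1}\sum_{i\neq2}\frac{1}{x_2-x_i}\rangle=-\frac{n-1}{2}\langle x_1^{-1}x_2^{-1}\rangle$, which closes the relation. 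Once this bookkeeping is in hand everything reduces to $\langle x_1^{-1}\rangle$, $\langle x_1^{-1}x_2^{-1}\rangle$, the two pairing sums, and numerical constants, and the rest is routine linear algebra; the only remaining care needed is to track the range of $\Re\alpha$ making the boundary terms vanish in each of the three integration-by-parts steps before invoking analytic continuation.
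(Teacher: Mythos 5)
Your proposal is correct: the three integration-by-parts relations you claim do hold (with $g=1-x_1$ one gets $(\alpha-1)\left\langle x_1^{-1}\right\rangle=\alpha+\beta+(n-1)\gamma-1$; with $g=x_2^{-1}(1-x_1)$, after separating the $j=2$ term from the symmetric $j\neq 1,2$ part, one gets $(\alpha+\gamma-1)\left\langle x_1^{-1}x_2^{-1}\right\rangle=(\alpha+\beta+n\gamma-1)\left\langle x_1^{-1}\right\rangle$; with $g=x_1^{-1}(1-x_1)$ one gets $(\alpha-2)\left\langle x_1^{-2}\right\rangle=(\alpha+\beta-2)\left\langle x_1^{-1}\right\rangle+\gamma(n-1)\left\langle x_1^{-1}x_2^{-1}\right\rangle$), and eliminating yields exactly the stated formula. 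But your route differs genuinely from the paper's. The paper applies the Aomoto-style identity to $x_1^{a}x_2^2\cdots x_n^2\,w_{n,\alpha,\beta,\gamma}$ at $a=1$ and $a=2$, subtracts, and then evaluates the resulting \emph{positive} moments $\left\langle x_1\cdots x_k\right\rangle$ at the shifted parameter $\alpha+1$ using Proposition~\ref{prop:Aomoto} together with ratios of Selberg integrals from Proposition~\ref{prop:Selberg}; the negative moment is finally read off by the substitution $\alpha\mapsto\alpha-2$, since $x_1^2\cdots x_{n-1}^2\,w_{n,\alpha-2,\beta,\gamma}$ is $x_n^{-2}\,w_{n,\alpha,\beta,\gamma}$ up to the Selberg normalization, which cancels. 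You instead use negative-power test functions to close a self-contained linear system among $\left\langle x_1^{-1}\right\rangle$, $\left\langle x_1^{-1}x_2^{-1}\right\rangle$, $\left\langle x_1^{-2}\right\rangle$. Your approach buys independence from both Selberg's evaluation and Aomoto's integral, and it produces the paper's Lemma~\ref{lem:twomoremoments} as a byproduct rather than needing it as input; the paper's approach buys simpler symmetrization (only the fully symmetric collapse is ever used, so the delicate broken-symmetry bookkeeping in your $g=x_2^{-1}(1-x_1)$ step is avoided) at the cost of carrying Selberg ratios and the parameter shift. One small point to tighten: for $g=x_1^{-1}(1-x_1)$ the boundary contribution at $x_1=0$ behaves like $x_1^{\alpha-2}$, so you need $\Re\alpha>2$, which is precisely the condition for $\left\langle x_1^{-2}\right\rangle$ to be a convergent integral; thus the analytic-continuation remark is essentially unnecessary, and you should simply state the identity for $\alpha>2$ (the theorem implicitly assumes this anyway, as does its application with $\alpha=\frac{1}{2}(p-n+1)$).
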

\begin{proof}
\begin{align*}
0&=\int_{0}^{1}\cdots\int_{0}^{1}\frac{\partial}{\partial x_1}\left(x_1^ax_2^2\cdots x_n^2w_{n,\alpha,\beta,\gamma}(x)\right)dx \\
&= (a + \alpha - 1)\left\langle x_1^{a-1}x_2^2\cdots
x_n^2\right\rangle_{n,\alpha,\beta,\gamma} \\
&\qquad{}- (\beta - 1)\left\langle\frac{x_1^ax_2^2\cdots x_n^2}{1-x_1}\right\rangle_{n,\alpha,\beta\,\gamma}
 + 2\gamma\sum_{j=2}^{n}\left\langle\frac{x_1^ax_2^2\cdots x_n^2}{x_1-x_j}\right\rangle_{n,\alpha,\beta,\gamma}
\end{align*}

By substituting $a = 1$, we get
\[
0=\alpha\left\langle x_2^2\cdots
x_n^2\right\rangle_{n,\alpha,\beta,\gamma} - (\beta -
1)\left\langle\frac{x_1x_2^2\cdots
  x_n^2}{1-x_1}\right\rangle_{n,\alpha,\beta\,\gamma}
+ 2\gamma\sum_{j=2}^{n}\left\langle\frac{x_1x_2^2\cdots x_n^2}{x_1-x_j}\right\rangle_{n,\alpha,\beta,\gamma}
\]
Similary, by substituting $a = 2$, we get
\[
0=(\alpha + 1)\left\langle x_1x_2^2\cdots
x_n^2\right\rangle_{n,\alpha,\beta,\gamma} 
- (\beta - 1)\left\langle\frac{x_1^2x_2^2\cdots x_n^2}{1-x_1}\right\rangle_{n,\alpha,\beta\,\gamma} + 2\gamma\sum_{j=2}^{n}\left\langle\frac{x_1^2x_2^2\cdots x_n^2}{x_1-x_j}\right\rangle_{n,\alpha,\beta,\gamma}
\]

Let us look at the rightmost averages in the two previous equations. The latter changes sign if the indices $1$ and $j$ are swapped, so it is zero. To get the former, notice that:
\[ \frac{x_1x_j^2}{x_1-x_j} + \frac{x_jx_1^2}{x_j-x_1} = -x_1x_j, \]
so the average becomes $-\frac{1}{2}\left\langle x_1x_2x_3^2\cdots x_n^2\right\rangle_{n,\alpha,\beta,\gamma}$. Also note that by factorization and cancelling:
\[
  \left\langle\frac{x_1x_2^2\cdots
    x_n^2}{1-x_1}\right\rangle_{n,\alpha,\beta\,\gamma}  -
  \left\langle\frac{x_1^2x_2^2\cdots
    x_n^2}{1-x_1}\right\rangle_{n,\alpha,\beta\,\gamma}
  = \left\langle x_1x_2^2\cdots
  x_n^2\right\rangle_{n,\alpha,\beta,\gamma}
\]

So subtracting the $a = 2$ equation from the $a = 1$ equation,
\[
0 = \alpha\left\langle x_2^2\cdots x_n^2\right\rangle_{n,\alpha,\beta,\gamma} - (\alpha + \beta)\left\langle x_1x_2^2\cdots x_n^2\right\rangle_{n,\alpha,\beta,\gamma} -(n-1)\gamma\left\langle x_1x_2x_3^2\cdots x_n^2\right\rangle_{n,\alpha,\beta,\gamma},
\]
or, equivalently,

\begin{multline*}
  0 = \alpha\left\langle x_1^2\cdots x_{n-1}^2\right\rangle_{n,\alpha,\beta,\gamma} 
 - (\alpha + \beta)\left\langle x_1\cdots
 x_{n-1}\right\rangle_{n,\alpha+1,\beta,\gamma}\times\frac{S_n(\alpha+1,\beta,\gamma)}{S_n(\alpha,\beta,\gamma)} \\
- (n-1)\gamma\left\langle x_1\cdots x_{n-2}\right\rangle_{n,\alpha+1,\beta,\gamma}\times\frac{S_n(\alpha+1,\beta,\gamma)}{S_n(\alpha,\beta,\gamma)}.
\end{multline*}

Now using Proposition~\ref{prop:Selberg} and~\ref{prop:Aomoto},
\begin{multline*}
\left\langle x_1\cdots
x_k\right\rangle_{n,\alpha+1,\beta,\gamma}\cdot\frac{S_n(\alpha+1,\beta,\gamma)}{S_n(\alpha+2,\beta,\gamma)}\\
\begin{aligned}
&=\prod_{i=1}^{k}\frac{\alpha+1+(n-i)\gamma}{\alpha+\beta+1+(2n-i-1)\gamma}
\times\prod_{i=0}^{n-1}\frac{\alpha+\beta+1+(n+i-1)\gamma}{\alpha+1+i\gamma} \\
&=
\prod_{i=1}^{k}\frac{\alpha+1+(n-i)\gamma}{\alpha+\beta+1+(2n-i-1)\gamma}
\times\prod_{i=1}^{n}\frac{\alpha + \beta + 1+(2n - i - 1)\gamma}{\alpha+1+(n-i)\gamma}\\
&= \prod_{i=k+1}^{n}\frac{\alpha + \beta + 1+(2n - i -
  1)\gamma}{\alpha+1+(n-i)\gamma}.
\end{aligned}
\end{multline*}

So,
\begin{multline*}
  \left\langle x_1^2\cdots x_{n-1}^2\right\rangle_{n,\alpha,\beta,\gamma} 
   = \frac{\alpha+\beta+(n-1)\gamma+1}{\alpha(\alpha+1)}\times\frac{S_n(\alpha+2,\beta,\gamma)}{S_n(\alpha,\beta,\gamma)}\\
  \times\left(\alpha+\beta
  + \frac{\gamma(n-1)(\alpha+\beta+n\gamma+1)}{\alpha+\gamma
    +1}\right)
\end{multline*}
and,
\begin{multline*}
  \left\langle x_1^2\cdots
  x_{n-1}^2\right\rangle_{n,\alpha-2,\beta,\gamma} 
  = \frac{\alpha+\beta+(n-1)\gamma-1}{(\alpha-1)(\alpha-2)}
  \times\frac{S_n(\alpha,\beta,\gamma)}{S_n(\alpha-2,\beta,\gamma)}\\
  \times\left(\alpha+\beta-2 +
  \frac{\gamma(n-1)(\alpha+\beta+n\gamma-1)}{\alpha+\gamma
    -1}\right).
\end{multline*}
Hence
\begin{multline*}
\left\langle x_1^{-2}\right\rangle_{n,\alpha,\beta,\gamma} =
\frac{\alpha+\beta+(n-1)\gamma-1}{(\alpha-1)(\alpha-2)}\\
{}\times\left(\alpha+\beta - 2 +
\frac{\gamma(n-1)(\alpha+\beta+n\gamma-1)}{\alpha+\gamma -1}\right).
\end{multline*}
\end{proof}
Furthermore, by Propositions~\ref{prop:Selberg} and~\ref{prop:Aomoto},
\begin{lemma}\label{lem:twomoremoments}
  \begin{align*}
    \left\langle x_1^{-1}\right\rangle_{n,\alpha,\beta,\gamma} &=
    \frac{\alpha+\beta+(n-1)\gamma-1}{\alpha - 1},\hspace{\fill}\mbox{}
    \\
    \intertext{and}
    \left\langle
    x_1^{-1}x_2^{-1}\right\rangle_{n,\alpha,\beta,\gamma}
    &=
    \frac{(\alpha+\beta+(n-1)\gamma-1)(\alpha+\beta+n\gamma-1)}{(\alpha
      - 1)(\alpha + \gamma - 1)}.
  \end{align*}
\end{lemma}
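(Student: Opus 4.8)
The plan is to reuse, almost verbatim, the device by which the proof of Theorem~\ref{thm:Jacobimoments} turned Aomoto's positive-moment formula into the negative moment $\left\langle x_1^{-2}\right\rangle$: a negative power of a single coordinate can be absorbed into a downward shift of the parameter $\alpha$ in the Selberg weight. The starting point is the elementary absorption identity
\[
x_1\cdots x_k\, w_{n,\alpha-1,\beta,\gamma}(x)=x_{k+1}^{-1}\cdots x_n^{-1}\, w_{n,\alpha,\beta,\gamma}(x),\qquad 0\le k\le n,
\]
which is immediate once one notices that $\prod_{i}x_i^{\alpha-2}\cdot\prod_{i\le k}x_i=\prod_{i\le k}x_i^{\alpha-1}\cdot\prod_{i>k}x_i^{\alpha-2}$ and that all other factors of $w$ coincide. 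Integrating over $[0,1]^n$ and using the permutation invariance of $\left\langle\,\cdot\,\right\rangle$ then gives
\[
\left\langle x_1^{-1}\cdots x_{n-k}^{-1}\right\rangle_{n,\alpha,\beta,\gamma}=\left\langle x_1\cdots x_k\right\rangle_{n,\alpha-1,\beta,\gamma}\cdot\frac{S_n(\alpha-1,\beta,\gamma)}{S_n(\alpha,\beta,\gamma)}.
\]

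Second, I would evaluate the right-hand side with the combined Aomoto--Selberg computation already carried out inside the proof of Theorem~\ref{thm:Jacobimoments}: the identity $\left\langle x_1\cdots x_k\right\rangle_{n,\alpha+1,\beta,\gamma}\cdot S_n(\alpha+1,\beta,\gamma)/S_n(\alpha+2,\beta,\gamma)=\prod_{i=k+1}^{n}\frac{\alpha+\beta+1+(2n-i-1)\gamma}{\alpha+1+(n-i)\gamma}$, after the substitution $\alpha\mapsto\alpha-2$, yields
\[
\left\langle x_1^{-1}\cdots x_{n-k}^{-1}\right\rangle_{n,\alpha,\beta,\gamma}=\prod_{i=k+1}^{n}\frac{\alpha+\beta-1+(2n-i-1)\gamma}{\alpha-1+(n-i)\gamma},
\]
legitimate because $k\le n-1$ satisfies Aomoto's hypothesis $k<n$. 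Specializing $k=n-1$ retains only the $i=n$ factor, $\frac{\alpha+\beta+(n-1)\gamma-1}{\alpha-1}$, which is the first claim; specializing $k=n-2$ retains the factors $i=n-1$ and $i=n$, whose product simplifies to $\frac{(\alpha+\beta+(n-1)\gamma-1)(\alpha+\beta+n\gamma-1)}{(\alpha-1)(\alpha+\gamma-1)}$, which is the second.

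There is no real obstacle here; the work is pure bookkeeping. The one thing to handle with care is keeping the two distinct shifts straight: the shift $\alpha\mapsto\alpha-1$ inside the weight $w$, which is what makes $x_j^{-1}$ integrable and is produced by the absorption/symmetry step, versus the shift $\alpha\mapsto\alpha-2$ applied to the already-proved Aomoto--Selberg identity, whose left-hand side is normalized at a parameter value two units above the weight it multiplies. One should also note the range of validity: the shifted Selberg integral $S_n(\alpha-1,\beta,\gamma)$ converges only for $\Re(\alpha)>1$ (with $\beta,\gamma$ in the usual Selberg range), so the identities are established there and then extended to $\alpha=\frac{1}{2}(p-n+1)$ by analytic continuation.
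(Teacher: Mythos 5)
Your proposal is correct and is essentially the paper's own route: the paper states the lemma as a direct consequence of Propositions~\ref{prop:Selberg} and~\ref{prop:Aomoto}, and your absorption of the negative powers into the shift $\alpha\mapsto\alpha-1$, followed by the Aomoto--Selberg ratio identity already derived in the proof of Theorem~\ref{thm:Jacobimoments} (with $\alpha\mapsto\alpha-2$), is exactly that computation carried out explicitly. The specializations $k=n-1$ and $k=n-2$ reproduce both stated formulas, so there is nothing to add beyond your (correct) remark on the integrability range $\alpha>1$.
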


Notice that Theorem~\ref{thm:Jacobimoments} and Lemma~\ref{lem:twomoremoments} above can be used to compute expected values of negative moments over the eigenvalues of the Jacobi Ensemble, where the conversion between $(\alpha,\beta)$ and $(m,n,p)$ is given in the description of the Jacobi distribution above in Proposition 3.

\subsection{A few simple identities}

Let us say that $M$ is any $a\times b$ matrix and $S$ is any $a\times a$ positive definite symmetric matrix, and $e$ and $f$ are vectors of i.i.d. standard Gaussians of lengths $a$ and $b$:

\begin{lemma}\label{lem:threeexpectations}
  \begin{equation*}
    \begin{split}
E[e^tSe] &= \trace{}(S), \\
E[(e^tSe)^2] &= \trace{}(S)^2 + 2\cdot\trace{}(S^2),\textrm{ and} \\
E[(e^tMf)(f^tM^te)] &= \trace{}(M^tM)
    \end{split}
\end{equation*}
\end{lemma}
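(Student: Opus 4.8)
The plan is to expand each quadratic or quartic form in coordinates and reduce everything to moments of a standard Gaussian vector, using only $E[e_ie_j]=\delta_{ij}$ (and the analogous statement for $f$), the independence of $e$ and $f$, and the Isserlis (Wick) formula $E[e_ie_je_ke_l]=\delta_{ij}\delta_{kl}+\delta_{ik}\delta_{jl}+\delta_{il}\delta_{jk}$ for the components of $e$.

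For the first identity, write $e^tSe=\sum_{i,j}S_{ij}e_ie_j$; taking expectations term by term kills everything except the diagonal $i=j$ contributions, whose sum is $\sum_iS_{ii}=\trace(S)$. No symmetry of $S$ is needed here. For the third identity, expand $e^tMf=\sum_{i,j}e_iM_{ij}f_j$ and $f^tM^te=\sum_{k,l}f_lM_{kl}e_k$, so the product is $\sum_{i,j,k,l}M_{ij}M_{kl}\,e_ie_k\,f_jf_l$. Because $e$ and $f$ are independent the expectation factors as $E[e_ie_k]\,E[f_jf_l]=\delta_{ik}\delta_{jl}$, and the surviving sum is $\sum_{i,j}M_{ij}^2=\sum_{i,j}M_{ij}(M^t)_{ji}=\trace(M^tM)$.

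The one step that takes a little care is the second identity. Writing $(e^tSe)^2=\sum_{i,j,k,l}S_{ij}S_{kl}\,e_ie_je_ke_l$ and applying Isserlis' formula splits the sum into three pieces: the pairing $\delta_{ij}\delta_{kl}$ gives $\sum_{i,k}S_{ii}S_{kk}=\trace(S)^2$, while each of the pairings $\delta_{ik}\delta_{jl}$ and $\delta_{il}\delta_{jk}$ gives $\sum_{i,j}S_{ij}S_{ij}$, which equals $\sum_{i,j}S_{ij}S_{ji}=\trace(S^2)$ since $S$ is symmetric. Summing the three contributions yields $\trace(S)^2+2\trace(S^2)$.

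I expect the \emph{main obstacle} to be purely bookkeeping in this last computation: correctly enumerating the three Wick pairings and recognizing each collected double sum as the trace of the appropriate matrix product. Symmetry of $S$ is the only structural hypothesis actually invoked (in the second identity); positive-definiteness is not used.
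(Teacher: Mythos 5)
Your proposal is correct and follows essentially the same route as the paper: expand each form in coordinates and reduce to Gaussian moment bookkeeping, with independence handling the mixed $e$,$f$ term. The only cosmetic difference is that you invoke Isserlis' (Wick's) formula for the fourth-moment computation, while the paper enumerates the index cases directly using $E[e_i^4]=3$ — the same calculation in different packaging.
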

\begin{proof}
\begin{align*}
E[e^tSe] = \sum_{i=1}^{a}\sum_{j=1}^{a}E[e_ie_j]S_{i,j}
= \sum_{i = 1}^{a}E[e_i^2]S_{i,i}
=\trace{}(S)
\end{align*}

\begin{align*}
E[(e^tSe)^2] &= \sum_{i=1}^{a}\sum_{j=1}^{a}\sum_{k=1}^{a}\sum_{l=1}^{a}E[e_ie_je_ke_l]S_{i,j}S_{k,l}\\
&= \sum_{i=1}^{a}E[e_i^4]S_{i,i}^2 + \sum_{i\neq k}E[e_i^2]E[e_k^2]S_{i,i}S_{k,k} + 2\sum_{i\neq j}E[e_i^2]E[e_j^2]S_{i,j}^2 \\
&= \sum_{i=1}^{a}3S_{i,i}^2 + \sum_{i\neq k}S_{i,i}S_{k,k} + 2\sum_{i\neq j}S_{i,j}^2 \\
&=  \sum_{i=1}^{a}\sum_{k=1}^{a}S_{i,i}S_{k,k} + 2\sum_{i=1}^{a}\sum_{j=1}^{a}S_{i,j}^2 \\
&= \trace{}(S)^2 + 2\cdot\trace{}(S^2)
\end{align*}

\begin{align*}
E[(e^tMf)(f^tM^te)] &= \sum_{i=1}^{a}\sum_{j=1}^{b}\sum_{k=1}^{a}\sum_{l=1}^{b}E[e_if_je_kf_l]M_{i,j}M_{k,l} \\
&= \sum_{i=1}^{a}\sum_{j=1}^{b}M_{i,j}^2 \\
&= \trace{}(M^tM)
\end{align*}
\end{proof}

\subsection{Primary results}\label{subsec:PrimaryResults}

We restate our main theorem.
\newcounter{savetheorem}
\setcounter{savetheorem}{\value{theorem}}
\newcounter{savesection}
\setcounter{savesection}{\value{section}}
\setcounter{theorem}{\value{vanillaregression}}
\setcounter{section}{\value{vanillaregressionsection}}
\begin{theorem}
    Let X and $\epsilon$ be i.i.d. standard Gaussians in
    $\mathbb{R}^{m\times n}$ and $\mathbb{R}^m$, let $b$ be in
    $\mathbb{R}^n$, let $\Sigma$ be positive definite in
    $\mathbb{R}^{n\times n}$ with Cholesky Decomposition $R^tR$, and let $y = XRb +
    \sigma\epsilon$, so the rows of $X$ have covariance $\Sigma$. Let $\hat{b} =
    \arg\min_{b^p}\|X_{1:p,:}Rb^p - y_{1:p}\|$ (all norms
    in this paper are Euclidean). Then
    \[
    \arg\min_{p}E\left[\left(\frac{1}{m-p}\|X_{p+1:m,:}R\hat{b}
      - y_{p+1:m}\|^2 - \sigma^2\right)^2\right]
    \]
    satisfies
\[
  \begin{split}\delta_{m,n}(p) &=
    9 + n (m^2 (2 + n)^2 + 3 (4 + n) - 2 m (5 + 2n)) \\
    &\qquad{}- 24 p -  n (12 + m^2 (2 + n) + 2 m n (3 + n)) p \\
  &\qquad{}+(22 + n (8 + 2 m (1 + n) + n (3 + n))) p^2 
  - (8 + n (2 + n)) p^3 + p^4 \\
  &=0,\end{split}
\]
as nearly as possible for an integer, giving $p = O(m^{2/3})$ with
leading order term $(2n+n^2)^{1/3}m^{2/3}$ for $m \gg n$.
\end{theorem}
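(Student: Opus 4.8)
The plan is to reduce the expectation to negative moments of the Jacobi Ensemble, substitute the formulas of Theorem~\ref{thm:Jacobimoments} and Lemma~\ref{lem:twomoremoments}, and then minimise over $p$. Write $X_1,X_2$ for the first $p$ and the last $m-p$ rows of $X$, and $\epsilon_1,\epsilon_2$ for the matching blocks of $\epsilon$; set $A=X_1R$, $B=X_2R$, $k=m-p$. Since $y_{1:p}=Ab+\sigma\epsilon_1$ and $A$ has full column rank almost surely, $\hat b=b+\sigma(A^tA)^{-1}A^t\epsilon_1$, so the test residual is $B\hat b-y_{p+1:m}=\sigma(v-\epsilon_2)$ with $v:=B(A^tA)^{-1}A^t\epsilon_1$. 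Put $W:=\|v\|^2$. Expanding $\|v-\epsilon_2\|^2=W-2v^t\epsilon_2+\|\epsilon_2\|^2$ and using that $\epsilon_2$ is a standard Gaussian independent of $(X,\epsilon_1)$ — whence $E[v^t\epsilon_2\mid v]=0$, $E[(v^t\epsilon_2)^2\mid v]=W$, $E[\|\epsilon_2\|^2]=k$, $E[(\|\epsilon_2\|^2-k)^2]=2k$, and the remaining cross terms vanish by oddness — yields
\[
E\!\left[\left(\tfrac{1}{k}\|B\hat b-y_{p+1:m}\|^2-\sigma^2\right)^2\right]=\frac{\sigma^4}{k^2}\bigl(E[W^2]+4E[W]+2k\bigr).
\]
Thus $\sigma$ only rescales the objective and drops out of the argmin.

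Next, $W=\epsilon_1^tM\epsilon_1$ with $M:=A(A^tA)^{-1}B^tB(A^tA)^{-1}A^t$ symmetric positive semidefinite and $\epsilon_1$ a standard Gaussian independent of $(A,B)$, so Lemma~\ref{lem:threeexpectations} gives $E[W\mid A,B]=\trace(M)$ and $E[W^2\mid A,B]=(\trace M)^2+2\trace(M^2)$. Cyclicity of the trace together with $A=X_1R$, $B=X_2R$ collapses these to $\trace(M)=\trace(G)$ and $\trace(M^2)=\trace(G^2)$, where $G:=(X_1^tX_1)^{-1}X_2^tX_2$; in particular $R$, hence $\Sigma$, cancels, as do $b$ and $\sigma$. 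By Proposition~\ref{prop:Jacobimodel} applied with $X=X_1$, $Y=X_2$, the matrix $X_1^tX_1(X_1^tX_1+X_2^tX_2)^{-1}$ has Jacobi eigenvalues $x_1,\dots,x_n$ with $\alpha=\tfrac{1}{2}(p-n+1)$, $\beta=\tfrac{1}{2}(m-p-n+1)$, $\gamma=\tfrac{1}{2}$, and since $G=(X_1^tX_1)^{-1}(X_1^tX_1+X_2^tX_2)-I$ its eigenvalues are $x_i^{-1}-1$. Hence $\trace(G)=\sum_i x_i^{-1}-n$, $\trace(G^2)=\sum_i x_i^{-2}-2\sum_i x_i^{-1}+n$, and $(\trace G)^2=\sum_i x_i^{-2}+\sum_{i\neq j}x_i^{-1}x_j^{-1}-2n\sum_i x_i^{-1}+n^2$; exchangeability of the Jacobi density reduces every expectation that appears to $\langle x_1^{-1}\rangle$, $\langle x_1^{-2}\rangle$ and $\langle x_1^{-1}x_2^{-1}\rangle$, which are exactly the quantities supplied by Lemma~\ref{lem:twomoremoments} and Theorem~\ref{thm:Jacobimoments} (with the above $\alpha,\beta,\gamma$).

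Substituting everything, the objective equals $\sigma^4F(p)$ with
\[
F(p)=\frac{3n\langle x_1^{-2}\rangle+n(n-1)\langle x_1^{-1}x_2^{-1}\rangle-2n^2\langle x_1^{-1}\rangle+n^2-2n+2(m-p)}{(m-p)^2},
\]
an explicit rational function of $p$ with $m,n$ as parameters. This is where I expect the main obstacle. A priori, over the common denominator $(p-n-1)(p-n)(p-n-3)$ the numerator of $F$ is a quartic, so a bald $F'(p)=0$ would have far too high a degree; the crux is that this numerator vanishes both at $p=m$ — there the three Jacobi moments all equal $1$ and $3n+n(n-1)-2n^2+n^2-2n=0$ — and at $p=n$ — there the part of the numerator coming from the three moments sums to zero and the rest contains the factor $(p-n)$ — so the numerator is divisible by $(m-p)(p-n)$. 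After cancelling these two factors one gets $F(p)=\hat P(p)\big/\bigl[(p-n-1)(p-n-3)(m-p)\bigr]$ with $\hat P$ a quadratic of leading coefficient $2$, and then $F'(p)=0$ is equivalent to $\hat P'(p)(p-n-1)(p-n-3)(m-p)-\hat P(p)\frac{d}{dp}\bigl[(p-n-1)(p-n-3)(m-p)\bigr]=0$, whose left-hand side is a polynomial of degree exactly $4$; a routine (computer-algebra-assisted) expansion shows it equals precisely $2\,\delta_{m,n}(p)$. That the minimiser is an interior critical point — hence a root of $\delta_{m,n}$, not an endpoint — follows because $F\ge0$ on the range $n+3<p<m$ where it is finite while $F\to+\infty$ at both ends (the $x_1^{-2}$ moment blows up as $p\to(n+3)^+$, and the $2(m-p)/(m-p)^2$ term blows up as $p\to m^-$).

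Finally, for the asymptotics one reads off from $\delta_{m,n}$ that the coefficients of $p^0$ and $p^1$ are $n(2+n)^2m^2+O(m)$ and $-n(2+n)m^2+O(m)$, that of $p^2$ is $2n(1+n)m+O(1)$, and those of $p^3,p^4$ are $O(1)$; with the ansatz $p=c\,m^{2/3}(1+o(1))$ the terms of order $m^{8/3}$ are $p^4$ and the $m^2p$ term, so the leading balance $c^4=n(2+n)c$ forces $c=(2n+n^2)^{1/3}$, i.e.\ $p=(2n+n^2)^{1/3}m^{2/3}$, all other contributions being of strictly smaller order; a refinement keeping the $m^{7/3}$ and $m^2$ terms is Corollary~\ref{cor:asymptotics}.
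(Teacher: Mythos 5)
Your proposal is correct and follows essentially the same route as the paper's proof: the same cancellation reducing the residual to $\sigma(A\epsilon_{1:p}-\epsilon_{p+1:m})$, the same passage via Lemma~\ref{lem:threeexpectations} to traces of $X_{p+1:m,:}^tX_{p+1:m,:}(X_{1:p,:}^tX_{1:p,:})^{-1}$, the same identification of its eigenvalues as $x_i^{-1}-1$ under Proposition~\ref{prop:Jacobimodel}, the same three negative moments from Theorem~\ref{thm:Jacobimoments} and Lemma~\ref{lem:twomoremoments}, the same quartic obtained by differentiating the resulting rational function, and the same dominant balance for the $m^{2/3}$ law (your conditioning on $\epsilon_{p+1:m}$ and your explicit divisibility of the numerator by $(m-p)(p-n)$ are tidier presentations of steps the paper does by direct expansion and by simply stating the simplified form of~\eqref{eqn:tominimize}). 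The one place you are thinner than the paper is the final identification of the argmin with the $m^{2/3}$ root: your nonnegativity-plus-blow-up argument shows the minimiser is an interior critical point, but the paper additionally argues convexity of~\eqref{eqn:tominimize} on $n+3<p<m$ precisely to exclude the quartic's remaining bounded real root (the $O(m^0)$ case) from being the minimiser, a point your ansatz $p=c\,m^{2/3}(1+o(1))$ assumes rather than proves.
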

\setcounter{theorem}{\value{savetheorem}}
\setcounter{section}{\value{savesection}}
\begin{proof}
  \[
  \hat{b} = (R^tX_{1:p,:}^tX_{1:p,:}R)^{-1}R^tX_{1:p,:}^ty_{1:p} =
  (X_{1:p,:}^tX_{1:p,:}R)^{-1}X_{1:p,:}^t(X_{1:p,:}Rb +
  \sigma\epsilon_{1:p})
  \]
and
\[ y_{p+1:m} = X_{p+1:m,:}Rb + \sigma\epsilon_{p+1:m} \]
So, plugging these expressions for $\hat{b}$ and $y_{p+1:m}$ into the
expression in Theorem~\ref{thm:vanillaregression}, the task is to find the $\arg\min_{p}$ of the expected value of:
\begin{multline*}\Bigg(\|X_{p+1:m,:}R(X_{1:p,:}^tX_{1:p,:}R)^{-1}X_{1:p,:}^t(X_{1:p,:}Rb +  \sigma\epsilon_{1:p})\\ - X_{p+1:m}Rb - \sigma\epsilon_{p+1:m}\|^2/(m - p) - \sigma^2\Bigg)^2. \end{multline*}
After cancellation within the norm (without expanding it), this task becomes finding
\[\arg\min_pE\left[\left(\frac{1}{m-p}\left\|\sigma A\epsilon_{1:p} - \sigma\epsilon_{p+1:m}\right\|^2 - \sigma^2\right)^2\right] \]
where $A = X_{p+1:m,:}\left(X_{1:p,:}^tX_{1:p,:}\right)^{-1}X_{1:p,:}^t$. Notice that $b$ and $R$ cancel out, $\sigma$ factors out, and they all become irrelevant.  Expanding, the next task is to find the $\arg\min_p$ of the expected value of
\begin{multline*}
  1 -\frac{2}{m-p}\big(\epsilon_{1:p}^tA^tA\epsilon_{1:p}
  + \epsilon_{p+1:m}^t\epsilon_{p+1:m} - 2\epsilon_{p+1:m}^tA\epsilon_{1:p}\big) \\
  + \frac{1}{(m-p)^2}\Big( (\epsilon_{1:p}^tA^tA\epsilon_{1:p})^2 +
  (\epsilon_{p+1:m}^t\epsilon_{p+1:m})^2
  +2(\epsilon_{1:p}^tA^tA\epsilon_{1:p}\epsilon_{p+1:m}^t\epsilon_{p+1:m}) \\
  \qquad\qquad{}
  +4(\epsilon_{p+1:m}^tA\epsilon_{1:p})^2
  -4 (\epsilon_{p+1:m}^t\epsilon_{p+1:m}\epsilon_{p+1:m}^tA\epsilon_{1:p})
  \\
  \qquad\qquad{}-4(\epsilon_{p+1:m}^tA\epsilon_{1:p}\epsilon_{1:p}^tA^tA\epsilon_{1:p})
  \Big).
\end{multline*}
Removing terms with expected value zero and replacing $E[\epsilon_{p+1:m}^t\epsilon_{p+1:m}]$ with $m-p$, this becomes:
\begin{multline*}
  1 -\frac{2}{m-p}\big(\epsilon_{1:p}^tA^tA\epsilon_{1:p}
  + m-p \big) \\
  + \frac{1}{(m-p)^2}\Big( (\epsilon_{1:p}^tA^tA\epsilon_{1:p})^2 +
  (\epsilon_{p+1:m}^t\epsilon_{p+1:m})^2
  +2(m-p)(\epsilon_{1:p}^tA^tA\epsilon_{1:p}) \\
  \qquad\qquad{}
  +4(\epsilon_{p+1:m}^tA\epsilon_{1:p})^2
  \Big).
\end{multline*}
Cancelling, this becomes
\begin{equation*}
 -1+\frac{1}{(m-p)^2}\Big( (\epsilon_{1:p}^tA^tA\epsilon_{1:p})^2 +
  (\epsilon_{p+1:m}^t\epsilon_{p+1:m})^2
  +4(\epsilon_{p+1:m}^tA\epsilon_{1:p})^2
  \Big).
\end{equation*}
Using Lemma~\ref{lem:threeexpectations}, the problem reduces to finding the $\arg\min_p$ of the expected value of:
\[ \frac{1}{(m-p)^2}\Big( 2\cdot\trace{}((A^tA)^2) +
  \trace{}(A^tA)^2+4\cdot\trace{}(A^tA) + 2(m-p) \Big).
\]
Note that since
\[
  A^tA =
  X_{1:p,:}(X_{1:p,:}^tX_{1:p,:})^{-1}X_{p+1:m,:}^t \cdot{}X_{p+1:m,:}(X_{1:p,:}^tX_{1:p,:})^{-1}X_{1:p,:}^t,
\]
if
\begin{align*}B&=
  X_{1:p,:}^tX_{1:p,:}(X_{1:p,:}^tX_{1:p,:})^{-1}X_{p+1:m,:}^t
   X_{p+1:m,:}(X_{1:p,:}^tX_{1:p,:})^{-1}\\
&=X_{p+1:m,:}^tX_{p+1:m,:}(X_{1:p,:}^tX_{1:p,:})^{-1},
\end{align*}
then $B$ is isospectral with $A^tA$ (at least as far as nonzero eigenvalues are concerned) and thus traces of their powers are the same. Hence, the problem reduces to finding the $\arg\min_p$ of the expected value of:
\[
  \frac{1}{(m-p)^2}\Big( 2\cdot\trace{}(B^2) + \trace{}(B)^2 
       +4\cdot\trace{}(B) + 2(m-p) \Big).
\]
Let $C$ be the Matrix Model for the Jacobi Ensemble as defined in
Proposition~\ref{prop:Jacobimodel}. $B = C^{-1} - I_{n\times n}$, with
$\alpha = \frac{1}{2}\left(p - n + 1\right)$, $\beta =
\frac{1}{2}\left(m - p - n + 1\right)$, and $\gamma = 1/2$. Let
$x_1,\ldots x_n$ be the eigenvalues of $C$. By
Proposition~\ref{prop:Jacobimodel}, next is to find the $\arg\min_p$
of the expected value of
\[
\frac{1}{(m-p)^2}\Bigg(2\sum_{i=1}^{n}\left(x_i^{-1} - 1\right)^2 +
\bigg(\sum_{i=1}^{n}\left(x_i^{-1} -
1\right)\bigg)^2+4\sum_{i=1}^{n}\left(x_i^{-1} - 1\right) + 2(m-p)
\Bigg)
\]
or rather
\[
\frac{1}{(m-p)^2}\Bigg(3\sum_{i=1}^{n}x_i^{-2} + \sum_{i\neq
  j}^{n}(x_ix_j)^{-1} -2n\sum_{i=1}^{n}x_i^{-1} + n^2 - 2n + 2(m - p)
\Bigg)
\]
which is $\arg\min_p$ of 
\begin{multline*}\frac{1}{(m-p)^2}\Bigg(3\sum_{i=1}^{n}\left\langle x_i^{-2}\right\rangle_{n,\alpha,\beta,\gamma}+ \sum_{i\neq j}^{n}\left\langle(x_i x_j)^{-1}\right\rangle_{n,\alpha,\beta,\gamma}\\-2n\sum_{i=1}^{n}\left\langle x_i^{-1}\right\rangle_{n,\alpha,\beta,\gamma} + n^2 - 2n + 2(m - p)  \Bigg), \end{multline*}
or
\begin{multline*}
  \frac{1}{(m-p)^2}\Bigg(3n\left\langle
  x_i^{-2}\right\rangle_{n,\alpha,\beta,\gamma} +
  n(n-1)\left\langle(x_i
  x_j)^{-1}\right\rangle_{n,\alpha,\beta,\gamma}\\
  -2n^2\left\langle x_i^{-1}\right\rangle_{n,\alpha,\beta,\gamma} +
  n^2 - 2n + 2(m - p)  \Bigg).
\end{multline*}
Using Theorem~\ref{thm:Jacobimoments} and Lemma~\ref{lem:twomoremoments} above, which can be used to compute expected values of negative moments over the Jacobi Ensemble, it remains to minimize over $p$:
\begin{multline}
  f(m,n)
  = \frac{3n(1 - m + n)(n(n + 3) + m(p - 1) - 2(n + 1)p)}{(n - p)(n -
    p + 1)(n - p + 3)(m - p)^2}
 + \frac{n^2 - 2n + 2(m - p)}{(m - p)^2}
  \\
  + \frac{n(n - 1)(-1 + m - n)(m - n)}{(n - p)(n - p + 1)(m - p)^2} 
- \frac{2n^2(1 - m + n)}{(n - p + 1)(m - p)^2}\label{eqn:tominimize}
\end{multline}
Differentiating with respect to $p$ and setting the formula equal to zero gives $\delta_{m,n}(p) = 0$.

The order of $p$ as a function of $m$ is found by the method of Dominant Balance. First consider $p = O(m^q)$ where $q \in (0,1]$, the $p = O(m^0)$ case is handled later (also, solving the equation $\delta_{m,n}(p) = 0$ analytically in Mathematica yields a complicated unsimplified expression with no logarthimic terms, this is possible because it is quartic in $m$). Assume $q = 1$. Then $\lim_{m\rightarrow\infty}\frac{\delta_{m,n}(p)}{m^4} = \frac{p^4}{m^4} = 0$, so the coefficient in the $O$-notation term for $O(m)$ would be zero, a contradiction. So for now it is assumed that $q\in(0,1)$. Then the only terms that matter in $\delta_{m,n}(p)$ are $p^4$ and $-m^2n(2+n)p$, taking $\lim_{m\rightarrow\infty}\frac{\delta_{m,n}(p)}{m^q}$ for various values of $q$ makes all the rest of the terms zero while preserving these as potentially nonzero. Setting the sum of these two terms equal to zero and ignoring two complex roots gives $p = O(m^{2/3})$ with leading order term $(2n+n^2)^{1/3}m^{2/3}$, which checks out since $p^4$ and $m^2n(2+n)p$ then have the same order, $8/3$.

Equation~\eqref{eqn:tominimize} is convex for $p > n + 3$. This means that any root of $O(m^0)$ of $\delta_{m,n}(p)$ would have to be a maximum of \eqref{eqn:tominimize}, since the other three roots are accounted for. Recall that the product of convex functions is convex, and note that \eqref{eqn:tominimize} simplifies to
\begin{equation*} f(m,n) = \frac{6 + m n (2 + n) - (8 + n (2 + n) - 2 p) p}{(m - p) (-3 - n + 
   p) (-1 - n + p)}. \end{equation*}
The numerator is obviously convex in $p$ since it is a quadratic and $p^2$ has a positive sign; it is necessary that one divided by the denominator is convex too.  $1/(m-p)$, $1/(-3-n+p)$, and $1/(-1-n+p)$ are all individually convex in $p$ for $p > n+3$ and $p < m$, so their product is convex.
\end{proof}

\begin{corollary}\label{cor:asymptotics} If $\delta_{m,n}(p^*(m,n)) = 0$ and $n$ is fixed,
\begin{multline*}
  p^*(m,n) 
  =  m^{2/3} (n (2 + n))^{1/3} -  m^{1/3}\cdot\frac{2 n (1 +
  n)}{3 (n (2 + n))^{1/3}}
  + \frac{1}{3}(6 + n + n^2) \\
  {}- m^{-1/3}\cdot\frac{2
  n^2 (216 + 230 n + 87 n^2 + 24 n^3 + 5 n^4)}{81 (n (2 + n))^{5/3}}+ \textrm{L.O.T.}
\end{multline*}
\end{corollary}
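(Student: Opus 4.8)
The plan is to treat Corollary~\ref{cor:asymptotics} as a Puiseux-expansion problem for the distinguished root of the quartic $\delta_{m,n}(p)=0$ as $m\to\infty$ with $n$ fixed. First I would rescale: set $\tau = m^{-1/3}$ and $p = m^{2/3}u$, and define $G(u,\tau) = m^{-8/3}\,\delta_{m,n}(m^{2/3}u)$. A quick check of the five monomials making up $\delta_{m,n}$ shows that every term $m^i p^j$ occurring in it satisfies $8-3i-2j\ge 0$ (the minimum $0$ is attained by $p^4$ and by the $m^2p$ piece), so after the substitution and division by $m^{8/3}$ one obtains an honest polynomial $G(u,\tau)$ in the two variables $u,\tau$ with coefficients depending polynomially on $n$. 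This is precisely the change of variables that turns the dominant balance of Theorem~\ref{thm:vanillaregression} ($p^4$ against $-m^2n(2+n)p$) into the factorization $G(u,0)=u\big(u^3-n(2+n)\big)$.

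Next I would identify the branch. The relevant root of $G(u,0)$ is $u=c_0:=(n(2+n))^{1/3}$, the real positive root singled out in the proof of Theorem~\ref{thm:vanillaregression}; the root $u=0$ and the two complex roots are discarded. Since $\partial_u G(c_0,0) = 4c_0^3-n(2+n) = 3n(2+n)\neq 0$ for $n\ge 1$, the analytic implicit function theorem yields a unique analytic $u(\tau)$ near $\tau=0$ with $u(0)=c_0$ and $G(u(\tau),\tau)\equiv 0$. Then $p^*(m,n)=m^{2/3}u(m^{-1/3})$, and the claimed expansion is just the Taylor expansion of $u$ at $0$ multiplied by $m^{2/3}$; in particular the remainder after the displayed terms is $O(m^{-2/3})$, which is what ``L.O.T.'' abbreviates.

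To pin down the coefficients I would run the standard matching: substitute the ansatz $p = c_0 m^{2/3}+c_1 m^{1/3}+c_2+c_3 m^{-1/3}+\cdots$ into $\delta_{m,n}(p)=0$, expand each of $p^4,p^3,p^2,p$ by the multinomial theorem together with the explicit $m$-coefficients sitting in $\delta_{m,n}$, and collect powers of $m^{1/3}$. The order-$m^{8/3}$ equation recovers $c_0^3=n(2+n)$; thereafter the coefficient of $c_k$ in the order-$m^{(8-k)/3}$ equation is always $4c_0^3-n(2+n)=3n(2+n)$, so each $c_k$ is determined recursively and uniquely in terms of $c_0,\dots,c_{k-1}$ and $n$. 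Carrying this out for $k=0,1,2,3$ — which I would do with a computer algebra system, matching the style of the rest of the paper — yields $c_0=(n(2+n))^{1/3}$, $c_1=-\frac{2n(1+n)}{3(n(2+n))^{1/3}}$, $c_2=\frac{1}{3}(6+n+n^2)$, and $c_3=-\frac{2n^2(216+230n+87n^2+24n^3+5n^4)}{81(n(2+n))^{5/3}}$, which is the statement.

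The main obstacle is purely computational bookkeeping: at order $m^{(8-k)/3}$ one must gather contributions from the multinomial expansions of $p^4$, $p^3$, $p^2$, and $p$ simultaneously, each weighted by the correct (and different) powers of $m$ hidden in the linear, quadratic, cubic, and constant coefficients of $\delta_{m,n}$, and by $k=3$ this involves cross-terms such as $c_0^2c_1$, $c_0 c_1^2$, $c_1^3$, $c_0 c_2$. Organizing the recursion through implicit differentiation, e.g.\ $u'(0)=-G_\tau/G_u$ and the corresponding formula for $u''(0)$, all evaluated at $(c_0,0)$, keeps the algebra honest; it is nevertheless the step most prone to slips, so I would cross-check the final $c_k$ numerically against the exact quartic root for several $(m,n)$.
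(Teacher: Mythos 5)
Your proposal is correct, but it reaches the expansion by a different route than the paper. The paper's proof is a two-line Mathematica computation: solve the quartic $\delta_{m,n}(p)=0$ in closed form, then peel off the asymptotic terms one at a time by subtracting the partial sum and taking $\lim_{m\to\infty}$ of the difference multiplied by the appropriate integer power of $m^{1/3}$. You instead rescale ($\tau=m^{-1/3}$, $p=m^{2/3}u$, $G(u,\tau)=m^{-8/3}\delta_{m,n}(m^{2/3}u)$), check that $G$ is a genuine polynomial in $(u,\tau)$ because $8-3i-2j\ge 0$ for every monomial $m^ip^j$ of $\delta_{m,n}$, note $G(u,0)=u\bigl(u^3-n(2+n)\bigr)$ and $\partial_uG(c_0,0)=3n(2+n)\neq 0$, and invoke the implicit function theorem to get an analytic branch $u(\tau)$ whose Taylor coefficients you then match recursively (the linear coefficient $3n(2+n)$ at each order guarantees unique solvability). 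Your version buys several things the paper's terse proof does not make explicit: it proves \emph{a priori} that the expansion proceeds in integer powers of $m^{-1/3}$ with an $O(m^{-2/3})$ remainder, it ties the branch unambiguously to the root singled out by the dominant-balance argument of Theorem~\ref{thm:vanillaregression} (the paper leaves implicit which of the four explicit quartic roots is $p^*$), and it avoids manipulating the unwieldy closed-form radical expression; the paper's approach is shorter given a CAS that solves quartics and computes limits directly. Your coefficients check out (I verified $c_0$, $c_1$, $c_2$ by the order-$m^{8/3}$, $m^{7/3}$, $m^{2}$ balances; the $m^2$ equation indeed collapses to $3n(2+n)c_2=n(2+n)(6+n+n^2)$ after the $c_0c_1$ and $c_0^2c_1^2$ terms cancel), so the only caveat is the one you already flag: the $c_3$ bookkeeping should be machine-checked, exactly as the paper itself does.
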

\begin{proof}
Proof by Mathematica. First, use it to solve the quartic to get $p^*$, then subtract some of the terms in this sequence from it and find the limit as $m\rightarrow\infty$ of the difference times an integer power of $m^{1/3}$.
\end{proof}

\begin{remark}
  The leading order term, $m^{2/3} (n (2 + n))^{1/3}$ in $p^*(m,n)$
  for fixed $n$ and large $m$ converges slowly, it is not within 1\%
  of the expression for $p^*(m,n)$ above without the Lower Order Terms
  until $m$ is twenty-seven million for n equal to ten. However, the
  expansion in Corollary~\ref{cor:asymptotics} converges quickly, as
  is seen in Fig.~\ref{fig:ratio5} and Fig.~\ref{fig:ratio20}.
\end{remark}

\subsection{Computational evidence}

\begin{figure}[tb!]
\centering\includegraphics[width=4in]{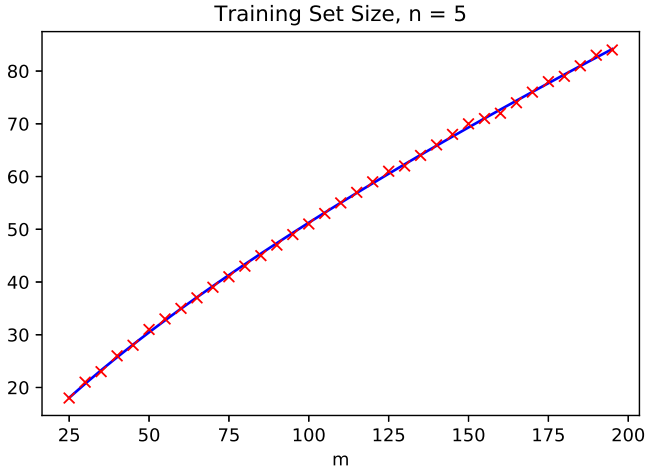}
\caption{\label{fig:optimalp5}Optimal $p$ by simulation and theorem, n
  = 5}
\end{figure}
  
  \begin{figure}[tb!]
\centering\includegraphics[width=4in]{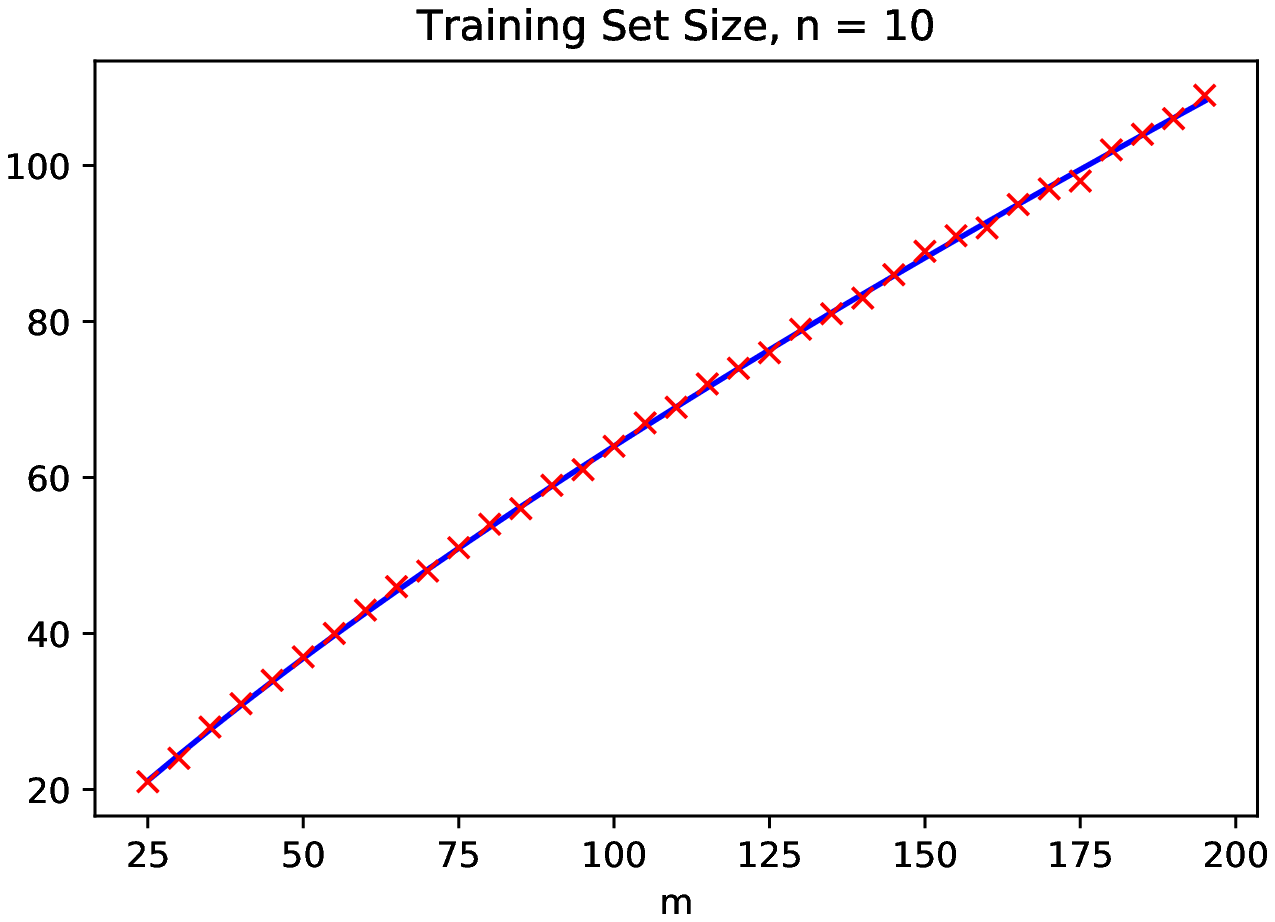}
\caption{\label{fig:optimalp10}Optimal $p$ by simulation and theorem,
  n = 10}
\end{figure}

\begin{figure}[tb!]
\centering\includegraphics[width=4in]{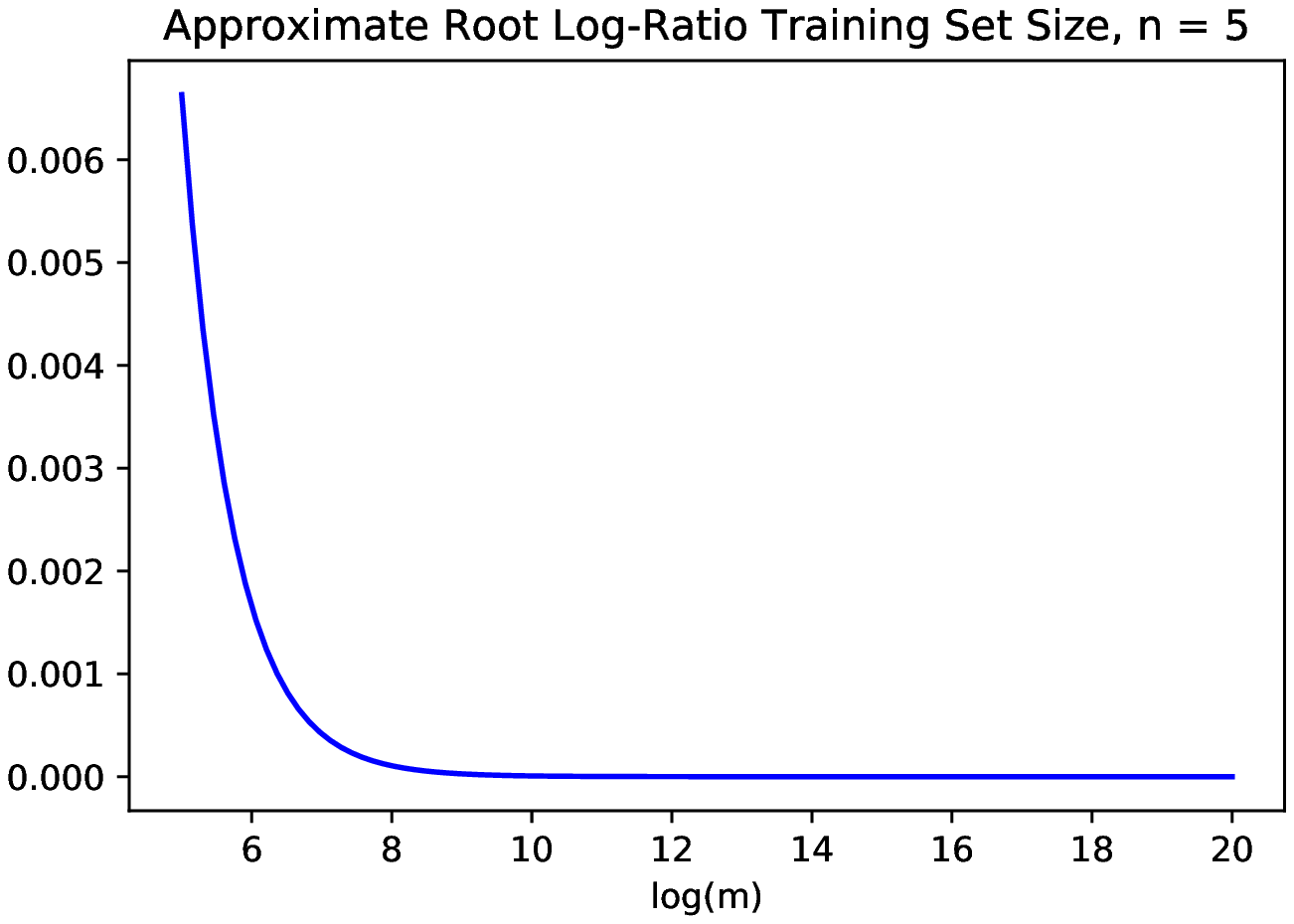}
\caption{\label{fig:ratio5}Ratio of optimal $p$ to Corollary~\ref{cor:asymptotics} expression, $n = 5$}
\end{figure}

\begin{figure}[tb!]
\centering\includegraphics[width=4in]{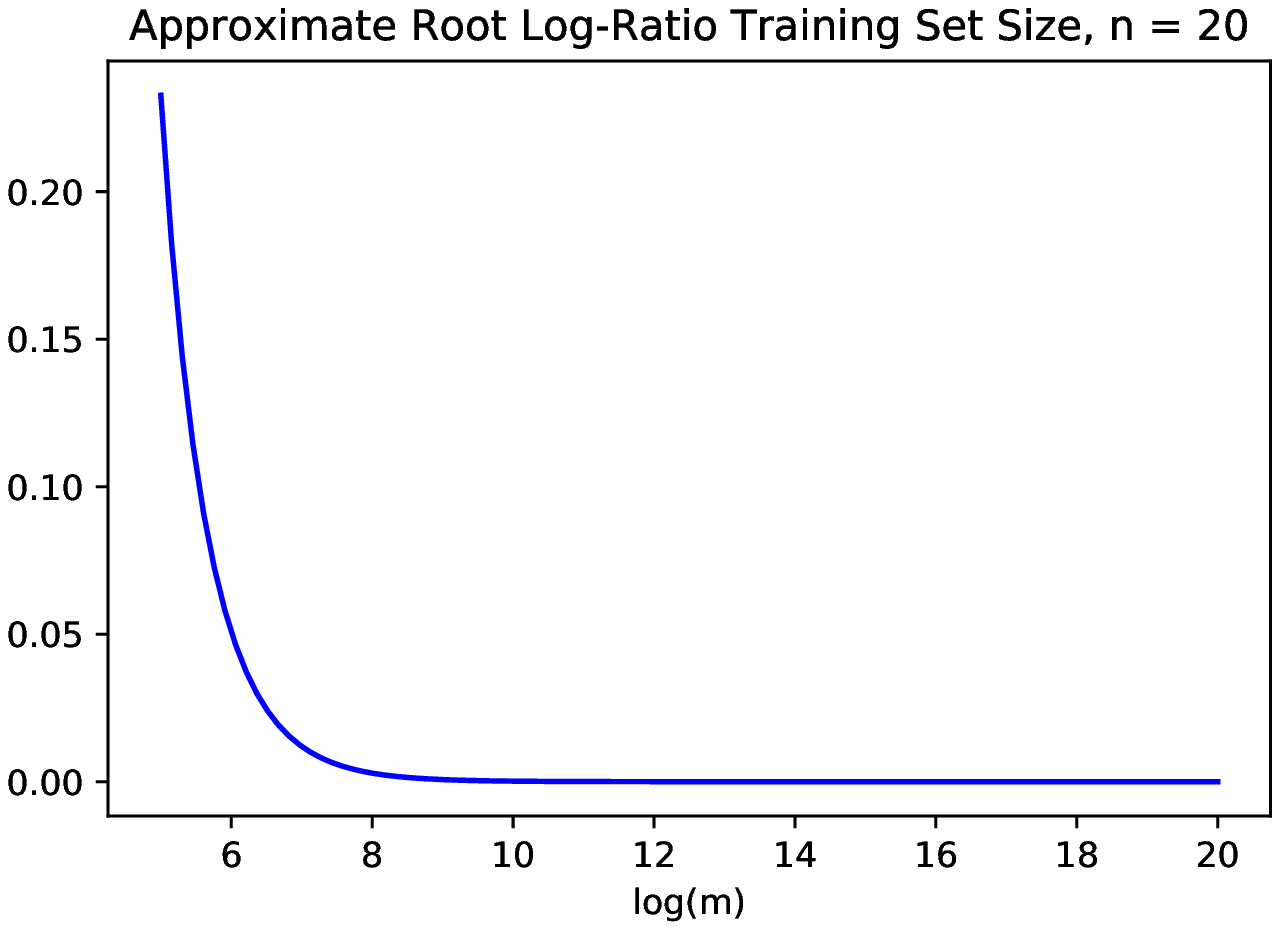}
\caption{\label{fig:ratio20}Ratio of optimal $p$ to Corollary~\ref{cor:asymptotics} expression, $n = 20$}
\end{figure}

The plots Fig.~\ref{fig:optimalp5} and Fig.~\ref{fig:optimalp10} show
the optimal $p^*(m,n)$, where $\delta_{m,n}(p^*(m,n)) = 0$, for a
given $m$ where $n = 5$ in Fig.~\ref{fig:optimalp5} (then $10$ in
Fig.~\ref{fig:optimalp10}) and $\sigma = 1$ in Fig.~\ref{fig:optimalp5}
(then $0.1$ in Fig.~\ref{fig:optimalp10}), by solving the quartic equation
which is $\delta_{m,n}(p) = 0$. Doing so for different values of $m$
creates a continuous curve, $p$ as a function of $m$, in
Fig.~\ref{fig:optimalp5} and Fig.~\ref{fig:optimalp10} The \verb|x|'s that
lie nearly on the curve are computed by simulation: First, for a given
$m$, $n$, and $\sigma$, pick $b$ and $\Sigma$ randomly. Then do the
following a million times: pick $X$ and $\epsilon$ as i.i.d standard
Gaussians in $\mathbb{R}^{m,n}$ and $\mathbb{R}^n$, then compute
\[y = XRb + \sigma\epsilon\]
and
\[\hat{b} = \arg\min_{b^p}\|X_{1:p,:}Rb^p - y_{1:p}\|\]
and
\[\left(\frac{1}{m-p}\|X_{p+1:m,:}R\hat{b} - y_{p+1:m}\|^2 - \sigma^2\right)^2\]
for every value of $p$ from $n + 1$ to $m - 1$. Enough data is now available to empirically compute the {\it integrity metric}
\[E\left[\left(\frac{1}{m-p}\|X_{p+1:m,:}R\hat{b} - y_{p+1:m}\|^2 - \sigma^2\right)^2\right]\]
for every value of $p$. Find the value of $p$ that minimizes it, and plot it as a \verb |x|  over its corresponding value of $m$ ($n$ is fixed for any given figure), which is near the curve described above. Note that the figures are approximately lines. These simulations verify Theorem~\ref{thm:vanillaregression}.

The above plots Fig.~\ref{fig:ratio5} and Fig.~\ref{fig:ratio20} show
the log-ratio of $p$ computed by $\delta_{m,n}(p) = 0$ and $p$
approximated by Corollary~\ref{cor:asymptotics} with the same parameters as before. Convergence is fast.

\subsection{Analysis of the method on real data}

\begin{table}
  \caption{\label{empiricalloss}This table shows the empirical losses on several data sets for given training-test divisions. The last two rows show the ratio used for training and the size of the data set.}
\centering\scalebox{.9}{\begin{tabular}{|c|c|c|c|c|c|c|}
\hline
& \cite{Coraddu2016}  &  \cite{Cho2020} & \cite{Chicco2020} & \cite{Abid2020} & \cite{Rafiei2015} & \cite{Santos2015} \\
\hline
\vphantom{$\frac{1^2}{2_3}$}$p = \frac{1}{2}m$&3.648e-06 & 2.149&0.1455 &0.08236 &213618 &1.311\\
\hline
\vphantom{$\frac{1^2}{2_3}$}$p = \frac{3}{4}m$& 3.645e-06& 2.144&0.1409 &0.07821 &78656 &0.3778\\
\hline
$p = p^*(m,n)$ &3.661e-06 & 2.156&0.1445 &0.08178 &55066 &0.2911\\
\hline
$p^*(m,n)/m$& 0.2434&0.3154 &0.5385 & 0.5267 &0.8871 &0.8545\\
\hline
$(m,n)$& (11934,16)&(7590, 21) & (299, 12)& (243, 10) & (372, 107)&(165, 49)\\
\hline
\end{tabular}}
\end{table}

Given detrended real data $X$ and labels $y$ (in both the averages of each columns are removed), the optimal $p$ and the loss for dividing the data into $X_{1:p,:}$ and $X_{p+1:m,:}$ for training and testing are found. Also found are the loss for certain standard values of $p$, $p = \frac{1}{2}m$ and $p = \frac{3}{4}m$. It is calculated by finding $\hat{b}$ where:
\[ \hat{b} = \arg\min_{b^p}\|X_{1:p,:}b^p - y_{1:p}\| \]
and then computing the ``loss'':
\[ \frac{1}{m - p}\|X_{p+1:m,:}\hat{b} - y_{p+1:m}\|^2. \]
for ten thousand permutations of the rows of $X$ and $y$, and then taking the average. If there are more than one variables to predict, $y$ is set to be the first one the others are ignored. Also ignored are columns with date information and rows with NaNs, columns with missing values indicated by ``?'' are interpolated as the average value of the column. $X$ and $y$ come from the data sets \cite{Coraddu2016}, \cite{Cho2020}, \cite{Chicco2020}, \cite{Abid2020}, \cite{Rafiei2015}, and \cite{Santos2015}, which are all available at {\it https://archive.ics.uci.edu/ml/datasets.php}. Table~\ref{empiricalloss}., computes the average loss on each data set over ten thousand permutations of the rows of $X$ and $y$ for $p = \frac{1}{2}m$, $p = \frac{3}{4}m$, and $p = p^*(m,n)$ as defined above, and also gives $p^*(m,n)/m$.

First observe that the training set sizes are very small in the big data case in the first two column, very large in the small data case ($m$ near $n$) in the last two columns, and in the realm that one would expect in the case in between, in the third and fourth columns. In the small data case the loss can be way off if $p$ is chosen poorly.

\section{Discussion}
There may soon come a time when machine learning practitioners, faced with $m$ data points of dimension $n$, will plug $(m,n)$ into an algorithm that suggests a good size for the training, validation, and test sets for their models in a matter of seconds, instead of relying on gut and hearsay. A natural generalization to this paper's results would be for linear models with more complex features. It would be interesting if $O(m^{2/3})$ sized training sets were a general law for large datasets.


\end{document}